\def\eqref#1{equation~\ref{#1}}
\def\1{\bm{1}}
\DeclareMathAlphabet{\mathsfit}{\encodingdefault}{\sfdefault}{m}{sl}
\SetMathAlphabet{\mathsfit}{bold}{\encodingdefault}{\sfdefault}{bx}{n}
\theoremstyle{plain}
\newtheorem{theorem}{Theorem}[section]
\theoremstyle{definition}
\newtheorem{definition}[theorem]{Definition}
\theoremstyle{remark}
\newtheorem{remark}[theorem]{Remark}
\newcommand{\nback}[1][-.95pt]{
  \mathrel{\raisebox{#1}{$\rotatebox[origin=c]{-315}{\scaleobj{0.55}{-}}$}}
}
\newcommand{\undernegpreccurlyeq}{%
\mathrel{\ooalign{$\preccurlyeq$\cr\kern1.2pt$\nback$}}}
\title{\textsc{LogicXGNN}: Grounded Logical Rules for Explaining Graph Neural Networks}
\author{Chuqin Geng\\
Department of Computer Science\\
McGill University; University of Toronto\\
\texttt{chuqin.geng@mail.mcgill.ca; chuqin.geng@mail.utoronto.ca} \\
\And
Ziyu Zhao\\
McGill University\\
\And
Zhaoyue Wang\\
McGill University\\
\And
Haolin Ye\\
McGill University\\
\And
Yuhe Jiang\\
University of Toronto\\
\And
Xujie Si\\
Department of Computer Science\\
University of Toronto\\
\texttt{six@cs.toronto.edu} \\
}
\begin{document}

\maketitle

\begin{abstract}
Existing rule-based explanations for Graph Neural Networks (GNNs) provide global interpretability but often optimize and assess fidelity in an intermediate, uninterpretable concept space, overlooking grounding quality for end users in the final subgraph explanations. This gap yields explanations that may appear faithful yet be unreliable in practice. To this end, we propose \textsc{LogicXGNN}, a post-hoc framework that constructs logical rules over reliable predicates explicitly designed to capture the GNN’s message-passing structure, thereby ensuring effective grounding. We further introduce data-grounded fidelity ($\textit{Fid}_{\mathcal{D}}$), a realistic metric that evaluates explanations in their final-graph form, along with complementary utility metrics such as coverage and validity. Across extensive experiments, \textsc{LogicXGNN} improves $\textit{Fid}_{\mathcal{D}}$ by over 20\% on average relative to state-of-the-art methods while being $10$–$100\times$ faster. With strong scalability and utility performance, \textsc{LogicXGNN} produces explanations that are faithful to the model’s logic and reliably grounded in observable data. Our code is available at \url{https://github.com/allengeng123/LogicXGNN}.
\end{abstract}

\section{Introduction}
\label{sec:intro}

Graph Neural Networks (GNNs) have emerged as powerful tools for modeling and analyzing graph-structured data, achieving remarkable performance across diverse domains, including drug discovery \citep{xiong2021,liu2022,sun2020}, fraud detection \citep{rao2021}, and recommender systems \citep{chen2022}. Despite their success, GNNs share the black-box nature inherent to neural networks, posing challenges to deployment in high-reliability applications such as healthcare \citep{amann2020,bussmann2021}.

To address this, numerous explanation methods have been developed to uncover the decision-making mechanisms of GNNs. However, most existing approaches are limited to providing local explanations tailored to specific input instances or rely on input-feature attributions for interpretability \citep{pope2019, ying2019, vu2020, lucic2022, tan2022}. A complementary line of work focuses on global explanations that characterize overall model behavior using rule-based approaches \citep{GCneuron, GLGExplainer, GraphTrail}. These methods map relevant substructures into an \emph{intermediate, abstract} concept space and then optimize logical formulas over these concepts to produce class-discriminative explanations. For interpretability, the abstract concepts are subsequently grounded in representative subgraphs, which serve as the final explanations presented to end users.

While intuitive, this grounding step can introduce \emph{unfaithfulness} and \emph{unreliability}: methods may (i) reconstruct invalid subgraphs by mismatching node attributes and concept structure, or (ii) select plausible yet poorly representative subgraphs as post hoc rationalizations. This echoes a critical concern in explainable AI \citep{UnjustifiedCounterfactualExplanations, DBLP:journals/cacm/Lipton18}, where explanations may reflect learned artifacts rather than genuine data evidence.  More importantly, these approaches optimize and evaluate fidelity—the degree to which explanations align with model predictions—\emph{in their intermediate concept space}, neglecting issues with the ill-grounded subgraph explanations ultimately presented, as illustrated in Figure \ref{fig:intro_a}. This oversight risks producing explanations that appear highly faithful yet fail to reflect concrete, observable patterns in the data, thereby undermining both usability and trustworthiness for end users \citep{DBLP:conf/acl/CamburuSMLB20}. For example, although the state-of-the-art method \textsc{GraphTrail}~\citep{GraphTrail} reports a $66\%$ fidelity score in its concept space on the Mutagenicity dataset~\citep{mutag}, its final explanations are \emph{entirely ungrounded}: not a single explanation subgraph is chemically valid, and none matches an instance in the dataset (see Figure~\ref{fig:intro_b}). To address this gap, we propose a framework for evaluating rule-based explanations in their final, grounded form. Our approach centers on data-grounded fidelity ($\textit{Fid}_{\mathcal{D}}$), a metric that assesses fidelity directly on the final subgraph explanations, supplemented by utility metrics such as coverage and validity. Under these criteria, the performance of existing methods drops dramatically, highlighting the need for explanations that are both faithful and truly interpretable.

\begin{figure}[t!]
    \centering
    \subfigure[Existing methods compute fidelity in the concept space.]{
        \label{fig:intro_a}
        \includegraphics[width=0.55\textwidth]{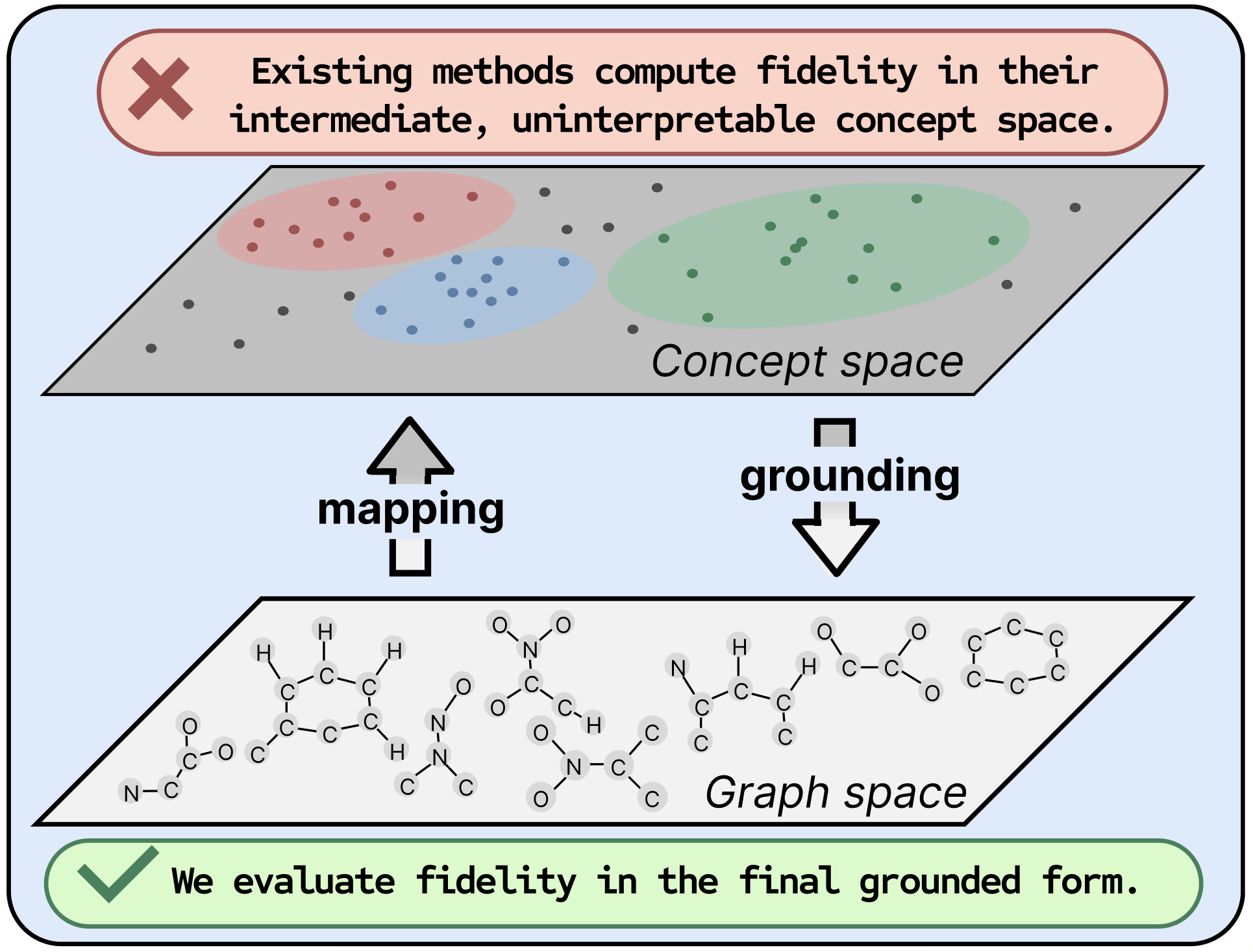}
    }\hfill
    \subfigure[High fidelity yet meaningless explanations.]{
        \label{fig:intro_b}
        \includegraphics[width=0.395\textwidth]{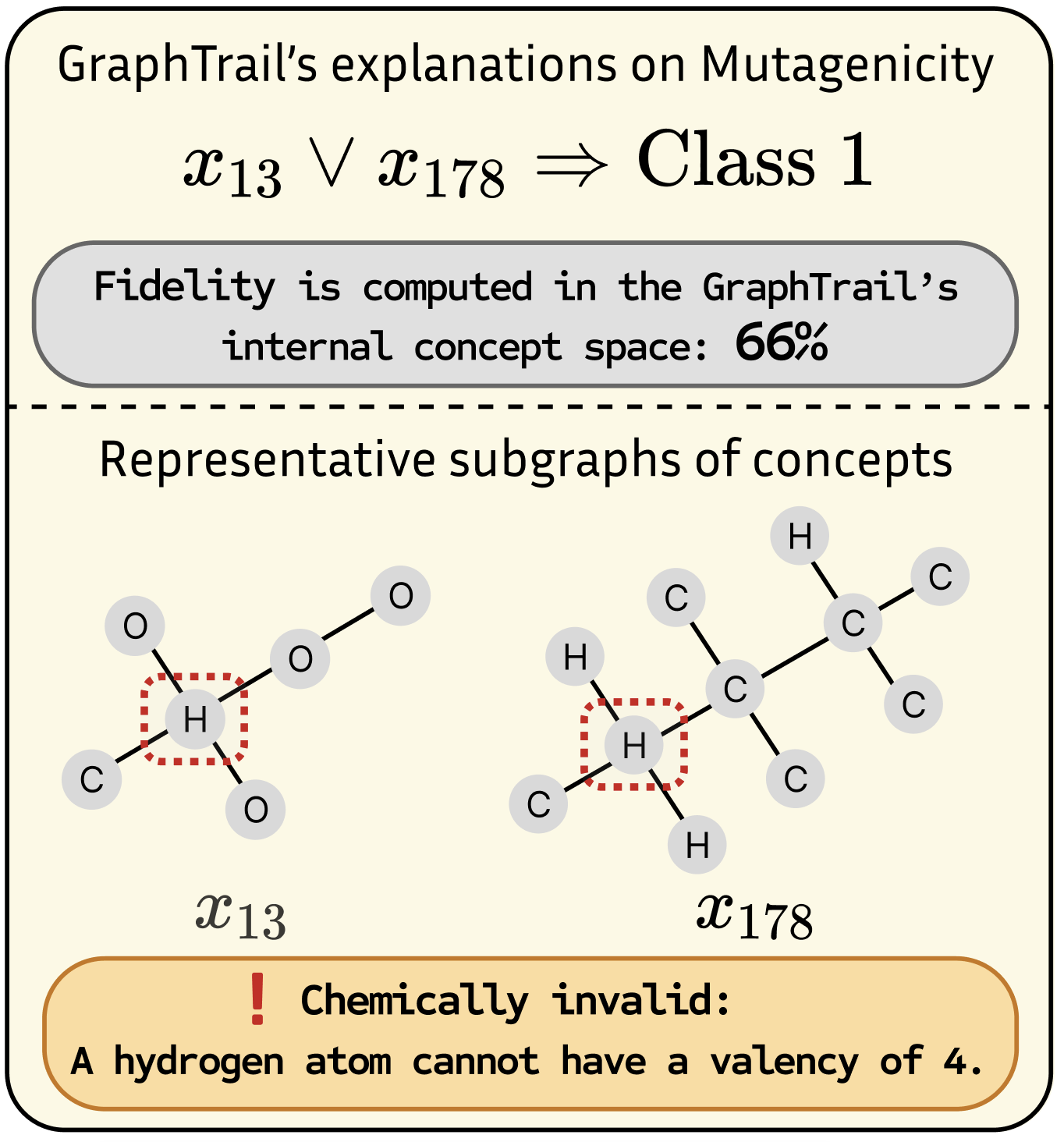}
    }
    \caption{Existing methods such as \textsc{GraphTrail} compute fidelity in an uninterpretable concept space while overlooking the grounding quality of final subgraph explanations presented to end users.}
    \label{fig:intro}
\end{figure}

To this end, we propose \textsc{LogicXGNN}, a novel post hoc framework for constructing explanation rules over reliable predicates. These predicates are explicitly designed to capture the structural patterns induced by the GNN's message-passing mechanism, providing a solid foundation for reliable grounding. As a result, \textsc{LogicXGNN} not only generates a rich set of representative subgraphs but also learns generalizable grounding rules for each predicate, addressing unreliable grounding in existing methods. Furthermore, our data-driven approach is highly efficient and demonstrates superior scalability on large real-world datasets, advantages we validate through extensive experiments. In summary, our key contributions are:

\begin{itemize}

    \item We identify a key issue in existing rule-based explanation methods for GNNs: they optimize and evaluate fidelity in an intermediate, uninterpretable concept space without proper data grounding, which undermines usability and trustworthiness. To quantify this effect, we introduce $\textit{Fid}_{\mathcal{D}}$, computed directly on the final-graph explanations presented to end users.

    \item We introduce \textsc{LogicXGNN}, a novel framework for generating faithful and interpretable logical rule-based explanations for GNNs. Unlike existing methods, \textsc{LogicXGNN} preserves structural patterns from message passing, enabling effective grounding that produces not only more representative subgraphs but also generalizable grounding rules.

    \item Our experimental results show that \textsc{LogicXGNN} significantly outperforms existing methods, achieving an average improvement of over 20\% in $\textit{Fid}_{\mathcal{D}}$ while being \emph{10–100×} faster in runtime. Additional metrics, including \emph{coverage}, \emph{stability}, and \emph{validity}, further confirm the superior practical utility of our generated explanations over existing methods.
    
\end{itemize}

\section{Preliminary}
\label{sec:back}

\subsection{Graph neural networks for graph classification}
Consider a graph \( G = (V_G, E_G) \), where \( V_G \) represents the set of nodes and \( E_G \) represents the set of edges. For the graph dataset \(\mathcal{G}\), let \(\mathcal{V}\) and \(\mathcal{E}\) denote the sets of vertices and edges across all graphs in \(\mathcal{G}\), respectively, with \(|\mathcal{V}| = n\). Each node is associated with a \( d_0 \)-dimensional feature vector, and the input features for all nodes are represented by a matrix \( \mathbf{X} \in \mathbb{R}^{n \times d_0} \). An adjacency matrix \( \mathbf{A} \in \{0, 1\}^{n \times n} \) is defined such that \( \mathbf{A}_{ij} = 1 \) if an edge \( (i, j) \in \mathcal{E} \) exists, and \( \mathbf{A}_{ij} = 0 \) otherwise. A graph neural network (GNN) model \( {M} \) learns to embed each node \( v \in \mathcal{V} \) into a low-dimensional space \( \mathbf{h}_v \in \mathbb{R}^{d_L} \) through an iterative message-passing mechanism over the \( L \) number of layers. At each layer \( l \), the node embedding is updated as follows:
\begin{equation}
    \mathbf{h}_v^{l+1} = \text{UPD} \left( \mathbf{h}_v^{l}, \text{AGG} \left( \left\{ \text{MSG} (\mathbf{h}_v^{l}, \mathbf{h}_u^{l}) \mid \mathbf{A}_{uv} = 1 \right\} \right) \right),
\end{equation}
where \( \mathbf{h}_v^{0} = \mathbf{X}_v \) is the feature vector of node \( v \), and \( \mathbf{h}_v^{l} \) represents the node embedding at the layer \( l \). The update function \( \text{UPD} \), aggregation operation \( \text{AGG} \), and message function \( \text{MSG} \) define the architecture of a GNN. For instance, Graph Convolutional Networks \citep{gcn} use an identity message function, mean aggregation, and a weighted update. A GNN model \( \mathcal{M} \) performs graph classification by passing the graph embeddings \( \mathbf{h}_G^{L} \) to a fully connected layer followed by a softmax function. Here, \( \mathbf{h}_G^{L} \) is commonly computed by taking the mean of all node embeddings in the graph \( \mathbf{h}_G^{L} := \text{mean}(\mathbf{h}_v^{L} \mid v \in V_G) \) through the operation \texttt{global\_mean\_pooling}.

\paragraph{Node Classification.} For node classification, the final embeddings are passed directly through a softmax function for individual label prediction. This approach omits the global pooling operation.

\subsection{First-order logical rules for GNN interpretability}

First-order logic (FOL) is highly interpretable to humans, making it an excellent tool for explaining the behaviour of neural networks \citep{inter_survey}. In this paper, our proposed framework, \textsc{LogicXGNN}, aims to elucidate the inner decision-making process of a GNN \( M \) using a \textit{Disjunctive Normal Form (DNF)} formula \( \phi_M \). The formula \( \phi_M \) is a logical expression that can be described as a disjunction of conjunctions (OR of ANDs) over a set of predicates \( P \), where each \( p_j \) represents a property defined on the graph structure \( \mathbf{A} \) and input features \( \mathbf{X} \). Importantly, \( \phi_M \) incorporates the \textit{universal quantifier} (\( \forall \)), providing a global explanation that is specific to a class of instances.

While this approach looks promising, generating a DNF formula \( \phi_M \) that faithfully explains the original GNN \( M \) remains challenging. Specifically, we must address the following key questions:
\begin{enumerate}
    \item How to define predicates \( P \) that \emph{reliably capture genuine structural patterns in the dataset, rather than just abstract symbols that lack effective grounding?}
    
    \item How to derive \emph{faithful} logical rules \( \phi_M \) over \( P \)  that explains the GNN’s predictions?

    \item Can we design an approach that is both efficient (with minimal computational overhead) and generalizable to different tasks and GNN architectures?
\end{enumerate}

\section{The \textsc{LogicXGNN} Framework (\( \phi_M \))}
\label{sec:method}

\subsection{Identifying hidden predicates \( P \) for \( \phi_M \)}

We begin by addressing the identification of hidden predicates for graph classification tasks. As discussed earlier, the desired predicates \( P \) should capture commonly shared patterns in both graph structures \(\mathbf{A}\) and hidden embeddings \( \mathbf{h}^{L} \) across a set of instances in the context of GNNs. While graph structure information can be encoded into hidden embeddings, it often becomes indistinguishable due to oversmoothing during the message-passing process \citep{DBLP:conf/aaai/LiHW18,GIN_HOW}.

\begin{figure*}[tb] 
    \centering 
    \subfigure[Identifying hidden predicates \( P \) for \(\phi_M\).]{%
        \includegraphics[width=0.51\linewidth, height=0.44\textwidth]{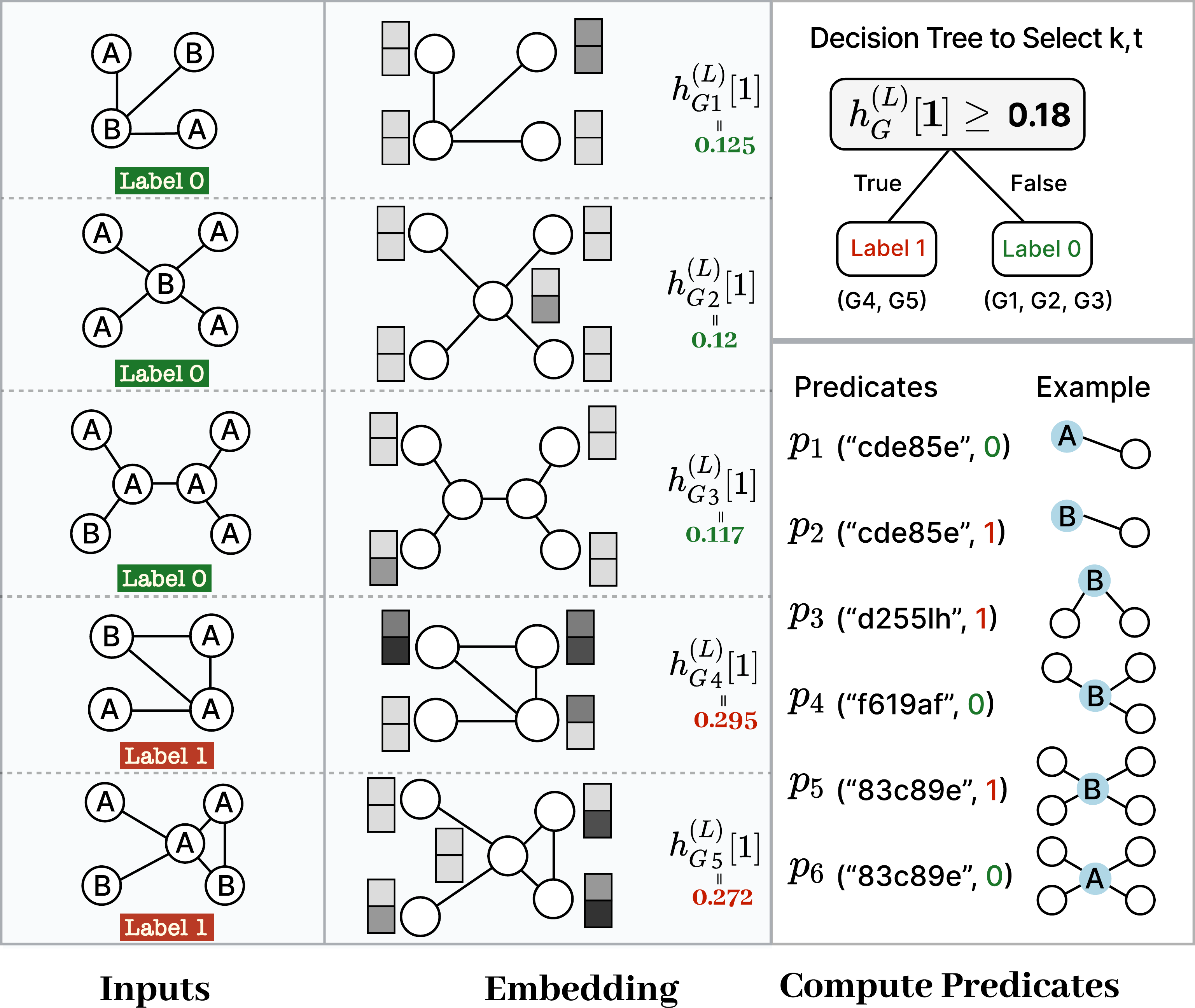}
        \label{fig:graph_a}
    }
    \subfigure[Extracting \(\phi_M\).]{%
        \includegraphics[width=0.19\linewidth, height=0.44\textwidth]{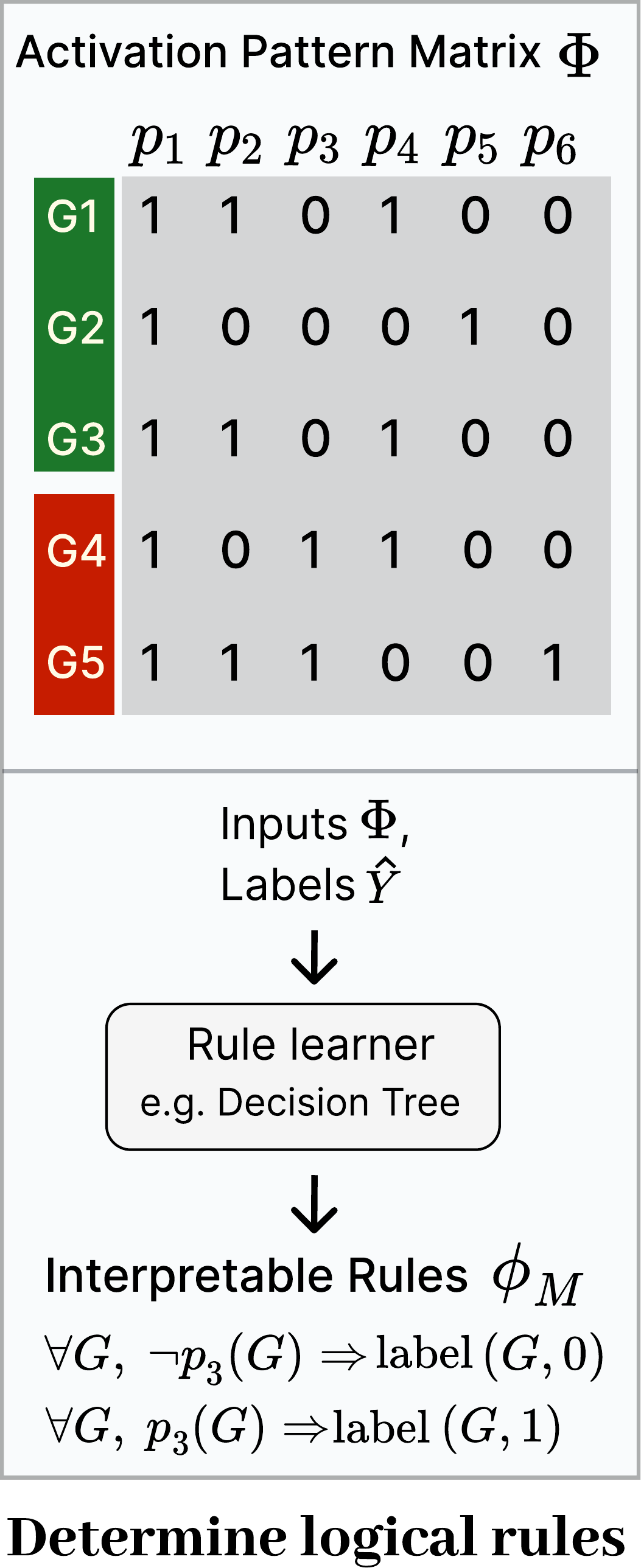}
        \label{fig:graph_b}
    }
    \subfigure[Grounding \(\phi_M\).]{%
        \includegraphics[width=0.25\linewidth, height=0.44\textwidth]{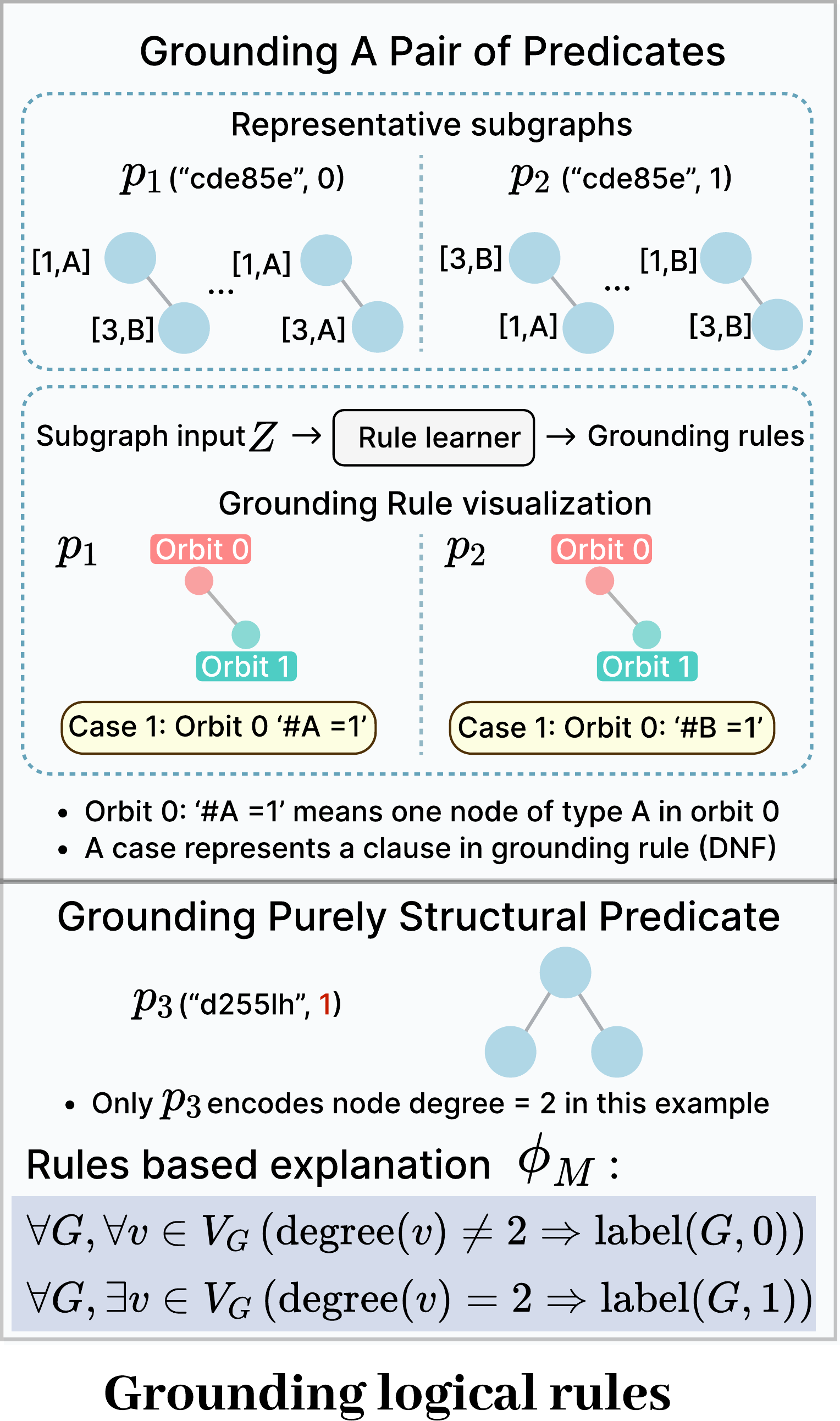}
        \label{fig:graph_c}
    }
    \caption{An overview of the \textsc{LogicXGNN} framework, which involves identifying hidden predicates, extracting rules, and grounding these rules in the input space for interpretability.}
    \label{fig:overview}
\end{figure*}

The core of our approach is to explicitly model the recurring structural patterns that a GNN uses for computation. After \(L\) layers of message passing in a GNN, the receptive field of a node \(v\) is the subgraph induced by its \(L\)-hop neighborhood. Our key insight is that nodes with structurally identical (isomorphic) receptive fields share the same fundamental computational pattern. To systematically capture and compare these patterns, we use Weisfeiler–Lehman (WL) graph hashing to assign a unique identifier to each distinct receptive field topology.\footnote{We use WL hashing without node or edge features to capture the pure topological structure upon which the GNN's message-passing operates. This ensures our structural patterns have an expressiveness equivalent to that of standard GNNs.} This allows us to efficiently record and model these recurring structures. Formally, the structural pattern for a node \(v\) is computed as follows:
\begin{equation}
\textit{Pattern}_{\text{struct}}(v) = \text{Hash}\big(\text{ReceptiveField}(v, \mathbf{A}, L)\big).
\end{equation}
Next, we discuss common patterns in the hidden embeddings. During GNN training, the hidden embeddings are optimized to differentiate between classes. Empirically, we find that a small subset of specific dimensions in the final-layer embeddings $\mathbf{h}_G^{L}$ is sufficient to distinguish instances from different classes using appropriate thresholds, often achieving accuracy comparable to the original GNN. Similar observations have been reported in \citet{geng2025}. In this work, we apply the decision tree algorithm to the collection of $\mathbf{h}_G^{L}$ from the training data to identify the most informative dimensions $K$ along with their corresponding thresholds $T$. Formally, this is expressed as:
\begin{equation}
\label{eq:find_k}
\text{DecisionTree}(\{\mathbf{h}_G^{L} \mid G \in \mathcal{G}\},\hat{Y}) \rightarrow (K, T)    
\end{equation}
where \( \hat{Y} \) represents the prediction outcome of the GNN. We then leverage this information to construct embedding patterns at the node level, aligning with the definition of structural patterns. Recall that \( \mathbf{h}_G^{L} := \text{mean}(\mathbf{h}_v^{L} \mid v \in V_G) \), so we broadcast \( K \) and \( T \) to each node embedding \( \mathbf{h}_v^{L} \).  Then, for an input node \( v \), its embedding value \( \mathbf{h}_v^{L} \) at each informative dimension \( k \in K \) is compared against the corresponding threshold \( T_k \). The result is then abstracted into binary states: 1 (activation) if the condition is met, and 0 (deactivation) otherwise. Formally, we have:
\begin{equation}
\mathcal{I}_k(\mathbf{h}_v^{L}) = 1 \text{ if } \mathbf{h}_v^{L}[k] \geq T_k, \text{ else } 0   
\end{equation}
In summary, the embedding pattern contributed by a given node \( v \) can be computed using the following function:
\begin{equation}
\textit{Pattern}_{\text{emb}}(v) = \left[ \mathcal{I}_1(\mathbf{h}_v^{L}), \mathcal{I}_2(\mathbf{h}_v^{L}), \dots, \mathcal{I}_K(\mathbf{h}_v^{L}) \right]    
\end{equation}
Putting it together, we define the predicate function as \( f(v) = (\textit{Pattern}_{\text{struct}}(v), \textit{Pattern}_{\text{emb}}(v)) \). To identify the set of predicates, we iterate over each node \( v \in \mathcal{V} \) in the training set, collect all \( f(v) \), and transform them into a set \( P \). In addition, when a node \( v \) is evaluated against a predicate \( p_j \), the evaluation \( p_j(v) \) is true only if both the structural and embedding patterns from \( f(v) \) match the predicate. To apply a predicate to a graph instance \( G \), we override its definition as follows:
\begin{equation}
\label{eq:predicate_match}
p_i(G) = 1 \text{ if  } \exists v \in V_G, \, p_i(v) = 1, \quad 
p_i(G) = 0 \text{ if  } \forall v \in V_G, \, p_i(v) = 0. 
\end{equation}
To better illustrate the process of identifying hidden predicates, we present a simple example in Figure~\ref{fig:graph_a}. This scenario involves a binary graph classification task, a common setup in GNN applications. In this example, we have five input graphs, with each node characterized by two attributes: degree and type. The types are encoded as one-hot vectors. A GNN with a single message-passing layer is applied, generating a 2-dimensional embedding for each node (i.e., \(d_L = 2\)) and achieving 100\% accuracy. As only one message-passing layer is used, structural patterns are extracted based on the nodes and their first-order neighbors.

Using decision trees, we identify the most informative dimension \( k = 1 \), and its corresponding threshold \( t = 0.18 \) from the graph embeddings. This threshold is then applied to the node embeddings to compute embedding patterns. As a result, six predicates are derived. Notably, \( p_5 \) (``83c89e'', 1) and \( p_6 \) (``83c89e'', 0) exhibit isomorphic structures, represented by identical hash strings, but differ in their activation patterns. Our predicates are therefore structurally grounded, as they capture concrete structural patterns from the training data, and model-faithful, since they are constructed by design to align with the GNN's predictions, \( \hat{Y} \). This offers a significant advantage over prior methods, which often lack clear subgraph correspondence \citep{GLGExplainer, GraphTrail}.

\subsection{Determining the logical structure of 
\( \phi_M \)}

The next task is to construct logical rules \( \phi_M \) based on hidden predicates \( P \) for each class, which serve as the explanation of the original GNN \( M \). We process all training instances from class \( c \in C \) that are correctly predicted by \( M \), evaluating them against the predicates \( P \) and recording their respective activation patterns. The results are stored in a binary matrix \( \Phi_c \) for each class \( c \), where the columns correspond to the predicates in \( P \), and the rows represent the training instances. Specifically, an entry \( \Phi_c[i, j] = 1 \) denotes that the \( j \)-th instance exhibits the \( i \)-th predicate, while \( \Phi_c[i, j] = 0 \) indicates otherwise, as illustrated in Figure \ref{fig:graph_b}.

From a logical structure perspective, each row in \( \Phi_c \) represents a logical rule that describes an instance of class \( c \), expressed in conjunctive form using hidden predicates. For instance, in the simple binary classification task introduced earlier, \( G_1 \) corresponds to the column \( (1, 1, 0, 1, 0, 0) \), which can be represented as \( p_1 \land p_2 \land \neg p_3 \land p_4  \land \neg p_5 \land \neg p_6 \). Here, we omit \( G \) in \( p_j(G) \)  for brevity. To obtain a more compact set of explanation rules, we input \( \Phi \) and \( \hat{Y} \) into an off-the-shelf rule learner, such as decision trees or symbolic regression \citep{pysr}. In this work, we use decision trees for computational efficiency. The tree depth serves as a tunable parameter for controlling the complexity of the rules. For instance, in our simple GNN setting, the decision tree yields the following explanation rules \( \phi_M \):
\begin{equation}
\forall G, \ \neg p_3(G) \Rightarrow \text{label}(G, 0), \quad \forall G, \ p_3(G) \Rightarrow \text{label}(G, 1).
\end{equation}

\subsection{Grounding \( \phi_M \) into the input feature space}

The next challenge lies in grounding \( \phi_M \). Prior work often simplifies this task by mapping each latent concept to a single representative subgraph, which can be poorly representative or even invalid (see our analysis of the root cause in the Appendix \ref{sec:baseline_grounding}). Such subgraphs can become especially meaningless when the input feature space \( \mathbf{X} \) is continuous. To address these issues, \textsc{LogicXGNN} goes beyond producing just individual explanation subgraphs. Moreover, it generates a set of generalized, fine-grained grounding rules that directly connect the hidden predicates \( P \) to the input space \( \mathbf{X} \). In particular, our predicate design explicitly integrates structural patterns, enabling both (i) the generation of diverse, representative subgraphs for each predicate and (ii) the construction of \emph{structure-aware inputs} $\mathbf{Z}$ for inferring the grounding rules.

Recall that each $p_j$ can be represented by a collection of isomorphic subgraphs that activate $p_j$. Formally, given a graph $G = (V, E)$, we consider the action of the automorphism group $\text{Aut}(G)$ on its node set $V$. The orbit of a node $v \in V$ under this action is defined as
\begin{equation}
\text{Orb}(v) = \{ u \in V \mid \exists \, \pi \in \text{Aut}(G) \text{ such that } \pi(v) = u \}.
\end{equation}
Each orbit corresponds to an equivalence class of nodes that are structurally indistinguishable within $G$. To create a canonical representation, we partition the node set into these orbits and establish a consistent ordering using Algorithm \ref{alg:stable_orbits} (Appendix~\ref{sec:grounding_details}):
\begin{equation}
\mathcal{O}(G) = \{\text{Orb}(v_1), \text{Orb}(v_2), \ldots, \text{Orb}(v_k)\},
\end{equation}
where each $\text{Orb}(v_i)$ denotes the orbit of node $v_i$ under $\text{Aut}(G)$, and the ordering is deterministic and can be reliably reproduced across isomorphic subgraphs (see proofs in Appendix~\ref{sec:grounding_details}).

\begin{definition}[Subgraph Input Feature $\mathbf{Z}$]
The input features of nodes in a subgraph $G$ (represented by pattern $p_j$) are aggregated in a structure-aware manner according to the orbit ordering $\mathcal{O}(G)$:
\begin{equation}
\mathbf{Z}_{G} = \text{CONCAT}_{\text{Orb} \in \mathcal{O}(G)} \Big( \text{AGGREGATE}_{u \in \text{Orb}} \mathbf{X}_u \Big),
\end{equation}
where $\text{AGGREGATE}$ applies frequency encoding (mean encoding) for multi-node orbits with discrete (continuous) features, and the identity function for singleton orbits. Since each subgraph $G$ corresponds to the $L$-hop neighborhood of a central node $v$,\footnote{Note that the central node $v$ can be treated as a singleton orbit, as adopted in our GNN example in Figure \ref{fig:overview}.} we adopt the notation $\mathbf{Z}_{v,L}$ in place of $\mathbf{Z}_{G}$ for convenience. For example, as shown in Figure \ref{fig:graph_c}, the subgraph input feature centered at node 1 is $\mathbf{Z}_{1,1} = (1, A, 3, B)$, which represents the concatenated features of nodes 1 and 2.
\end{definition}

Once we obtain $\mathbf{Z}$, we can derive interpretable grounding rules that approximate the embedding pattern function $\textit{Pattern}_{\text{emb}}(\cdot)$ encoded by the GNN. To address this, we leverage off-the-shelf rule learners; in this work, we utilize decision trees due to their computational efficiency and inherent interpretability. For predicates that exhibit isomorphic subgraph structures but distinct embedding patterns, we recast this problem as a supervised classification task, where each predicate \( p_j \) is treated as a unique class label \( j \). The training procedure constitutes a dataset for each predicate label $j$ by collecting the subgraph representations of all nodes $v$ that satisfy the predicate, formally defined as $\{\mathbf{Z}_{v,L} \mid p_j(v) = 1\}$. This process allows us to easily collect representative subgraphs, as shown in Figure \ref{fig:graph_c}. For example, the training data for \( p_1 \) (identified as (``cde85e'', 0)) is \( \{ \mathbf{Z}_{1,1}, \mathbf{Z}_{3,1}, \dots, \mathbf{Z}_{22,1} \} \), while the training data for \( p_2 \) (identified as (``cde85e'', 1)) is \( \{ \mathbf{Z}_{4,1}, \mathbf{Z}_{11,1}, \mathbf{Z}_{20,1} \} \). Applying the decision tree then generates rules $\mathbf{Z}[1] \leq 0.5$ for \( p_1 \) and the opposite for \( p_2 \). Recall that $\mathbf{Z}[1]$, the first dimension of $\mathbf{Z}$, encodes the central node type. Therefore, we recognize that $p_1$ indicates that the central node is of type ``A'', while $p_2$ indicates type ``B'', conditioned on the structural pattern being ``cde85e''.

For purely structural predicates without direct embedding counterparts, explanations are grounded in the presence of their topological structures. Consider the predicate $p_3$, (``d255lh'', 1), which activates when an input graph contains a subgraph isomorphic to the ``d255lh'' pattern, corresponding to a node with 2 edges, as illustrated in Figure~\ref{fig:graph_c}. Since this is the only predicate in our GNN example that encodes this specific property, we can ground $\phi_M$ into the following interpretable logical rules:
\begin{equation}
\forall G, \, \forall v \in V_G \, \left( \text{degree}(v) \neq 2 \right) \Rightarrow \text{label}(G, 0), \quad
\forall G, \, \exists v \in V_G \, \left( \text{degree}(v) = 2 \right) \Rightarrow \text{label}(G, 1).    
\end{equation}
However, such straightforward rules cannot always be derived in more complex scenarios. In general, the final rule-based explanation takes the form of logical rules over predicates, with predicates grounded either through grounding rules or representative subgraphs. More details about our grounding process, including the additional examples, handling of continuous features, and guidance on interpreting the grounding rule visualizations, are provided in Appendix~\ref{sec:grounding_details}.

\paragraph{Inference and Data-Grounded Fidelity.}  During inference, a definitive prediction for a class is made if and only if the logical rule for that class is uniquely satisfied, as determined by evaluating each predicate on the input graph (Eq.~\ref{eq:predicate_match}). We then compute data-grounded fidelity ($Fid_{\mathcal{D}}$) as the \emph{class-weighted} percentage of instances where this rule-based prediction exactly matches the original GNN's output. Note that a prediction is considered incorrect if it is ambiguous, which occurs when either no rule or multiple class rules are satisfied simultaneously. This issue is prevalent in prior methods, as shown with examples in Section~\ref{sec:exp_quality}. In contrast, our approach is guaranteed to avoid such ambiguity because its rules are derived from decision trees—a structure that inherently provides a unique classification for any given input. Further details on $Fid_{\mathcal{D}}$ and inference are in Appendix~\ref{sec:inference}.

\paragraph{Remark on Node Classification.} Our framework naturally extends to node-level tasks by utilizing the predicate function $f(v) = (\text{Pattern}_{\text{struct}}(v), \text{Pattern}_{\text{emb}}(v))$ to encode class-informative signals directly at the node level. Consequently, class-wise rules are obtained by aggregating the predicates associated with each class through a logical $\bigvee$ (OR) operation, bypassing the graph-level activation-matrix construction.


\subsection{Analysis}
\paragraph{Computational Complexity.} Our approach models message passing at each node to identify interpretable and reliable predicates. First, we extract activation patterns from pretrained GNNs and compute graph hashes over nodes’ local neighborhoods. Hashing the \(L\)-hop neighborhood of a node takes approximately \(O\!\left(L \cdot (\nu + \varepsilon)\right)\) time, where \(\nu\) and \(\varepsilon\) denote the number of nodes and edges within the neighborhood. In practice, for well-structured and relatively sparse datasets, this hashing behaves nearly constant in runtime. Importantly, this step operates independently of the GNN’s size, with overall complexity \(O(|\mathcal{V}| \cdot L \cdot (\nu + \varepsilon))\). Second, we determine the logical structure by constructing a binary matrix of size (number of predicates) \(\times\) (number of graphs), yielding a complexity of \(O(|\mathcal{V}| \cdot |\mathcal{G}|)\). Finally, grounding each predicate requires constructing a dataset of representative subgraphs. Given that fitting a small decision tree is typically fast, often taking near-constant time in practice, this yields a complexity of $O(|\mathcal{V}|^2)$. A comprehensive analysis of the decision tree training overhead is provided in Appendix~\ref{sec:add_runtime_analysis}. Empirical runtime results are reported in Table~\ref{tab:fid_res}.


\paragraph{Generalization Across Different GNN Architectures.}
We show the theoretical generalizability of \textsc{LogicXGNN} to any GNN architecture. First, we model stacked message-passing computations using hidden predicates (activation patterns and local subgraphs), an architecture-agnostic formulation. We then generate logical rules through binary matrix construction and decision tree analysis, maintaining architecture independence. Finally, we ground predicates by linking them to input features via decision trees, requiring no GNN-specific details. Empirical evidence is provided in Appendix~\ref{sec:gnn_generalization}.

\section{Evaluation}
\label{sec:eval}

In this section, we conduct extensive experimental evaluations on a broad collection of real-world benchmark datasets to investigate the following research questions:

\begin{enumerate}
    \item How does \( \phi_M \) perform compared to existing rule-based explanation methods across key metrics, including data-grounded fidelity, efficiency, and scalability?
\vspace{-1pt}
    \item How does \( \phi_M \) improve explanation quality over existing approaches, and what are the key advantages of our generated explanations?
\end{enumerate}

\paragraph{Baselines.}  
Consistent with prior work \citep{GraphTrail}, we restrict our comparison to global rule-based explanation methods, excluding local approaches such as \textsc{GNNExplainer} \citep{ying2019} and (sub)graph generation-based approaches such as  \textsc{GNNInterpreter} \citep{GNNInterpreter} due to their different scope. For evaluation, we compare our approach against state-of-the-art methods, \textsc{GLGExplainer}~\citep{GLGExplainer} and \textsc{GraphTrail}~\citep{GraphTrail}.\footnote{While GCNeuron~\citep{GCneuron} uses logic to describe individual neuron concepts, it relies on numerical importance scores rather than constructing explicit class-wise decision rules. Consequently, it lacks the translated logical rule-sets found in our method and the baselines, leading to its exclusion as a baseline.} Our primary evaluation metric is data-grounded fidelity, $\textit{Fid}_{\mathcal{D}}$. Additional results on other metrics are provided in Appendix~\ref{sec:add_exp}.

Due to page limits, detailed descriptions of the datasets are provided in Appendix~\ref{sec:dataset}, while the experimental setup, including GNN training and baseline implementations, can be found in Appendix~\ref{sec:setup}.

\subsection{How effective is \( \phi_M \) as a logical rule-based explanation tool?}

We report the data-grounded fidelity $Fid_D$ and runtime of our proposed approach \( \phi_M \) and baseline methods on commonly used datasets for GNN explanation research, with results presented in Table~\ref{tab:fid_res}. Results on large-scale real-world datasets are presented in Table~\ref{tab:large_fid_res}. Notably, \( \phi_M \) consistently outperforms both baselines by a substantial margin across all benchmarks. The performance gap arises because baseline explanation subgraphs are often poorly representative of the model’s decision regions, or even fail to match any real graph instances in the dataset (e.g., \textsc{GLGExplainer} yields 0\% $Fid_D$ on IMDB). This performance gap is expected, as baselines often simplify grounding by mapping latent concepts to single representative subgraphs, which frequently results in poor representation or structural invalidity. We provide a detailed analysis of these grounding failures in Appendix \ref{sec:baseline_grounding}. In contrast, $\phi_M$ learns grounding rules that generalize well, explaining a large portion of unseen test data. We further analyze the quality of their explanations using concrete examples and additional utility metrics in Section~\ref{sec:exp_quality}. Another interesting observation is that \( \phi_M \) can achieve relatively high fidelity even with simple rules, as shown in Figure~\ref{fig:k_plot}. The tunable depth also gives users the flexibility to choose an appropriate trade-off between fidelity and rule complexity.

In terms of runtime performance, both baseline methods are fundamentally bottlenecked by their reliance on computationally expensive operations for each instance. For example, \textsc{GLGExplainer} must invoke a separate local explainer, \textsc{PGExplainer}~\citep{PGExplainer}, for every graph, while \textsc{GraphTrail} requires numerous, costly GNN forward passes to process its computation trees. In contrast, \( \phi_M \) employs highly efficient graph traversal algorithms and decision trees, yielding a dramatic speedup of one to two orders of magnitude ($10$--$100\times$). This enables \( \phi_M \) to demonstrate significantly better scalability on large-scale real-world datasets such as Reddit, Twitch, and Github~\citep{karateclub}, where both baselines time out, as shown in Table~\ref{tab:large_fid_res}.

\begin{figure*}[t]
\centering

\begin{minipage}{\textwidth}
\centering
\captionof{table}{Data-grounded fidelity $Fid_{\mathcal{D}}$ (\%) on the test datasets and runtime (in $10^3$ seconds) for various explanation methods. Results are reported over three random seeds. For each dataset, the highest fidelity and fastest runtime are highlighted in bold. ``---'' indicates no rules were learned.}
\label{tab:fid_res}
\resizebox{\textwidth}{!}{%
\begin{tabular}{@{}lrrrrr rrrrr@{}}
\toprule
 & \multicolumn{2}{c}{BAMultiShapes} & \multicolumn{2}{c}{BBBP} & \multicolumn{2}{c}{Mutagenicity} & \multicolumn{2}{c}{NCI1} & \multicolumn{2}{c}{IMDB} \\
\cmidrule(lr){2-3} \cmidrule(lr){4-5} \cmidrule(lr){6-7} \cmidrule(lr){8-9} \cmidrule(lr){10-11}
Method & $\textit{Fid}_{\mathcal{D}}\uparrow$ & \textit{Time} $\downarrow$ & $\textit{Fid}_{\mathcal{D}}\uparrow$ & \textit{Time} $\downarrow$ & $\textit{Fid}_{\mathcal{D}}\uparrow$ & \textit{Time} $\downarrow$ & $\textit{Fid}_{\mathcal{D}}\uparrow$ & \textit{Time } $\downarrow$ & $\textit{Fid}_{\mathcal{D}}\uparrow$ & \textit{Time} $\downarrow$   \\
\midrule
\textsc{GLG}     & 31.09 {\scriptsize $\pm$ 5.81} & 0.31 {\scriptsize $\pm$ 0.02} & --- & 0.36 {\scriptsize $\pm$ 0.02} & 38.98 {\scriptsize $\pm$ 3.01} & 0.73 {\scriptsize $\pm$ 0.02} & 9.61 {\scriptsize $\pm$ 7.76} & 0.88 {\scriptsize $\pm$ 0.02} & 0.00 {\scriptsize $\pm$ 0.00} & 0.33 {\scriptsize $\pm$ 0.02} \\
\textsc{GTrail}  & 79.82 {\scriptsize $\pm$ 3.64} & 2.54 {\scriptsize $\pm$ 0.12} & 50.00 {\scriptsize $\pm$ 0.00} & 5.65 {\scriptsize $\pm$ 0.12} & 65.93 {\scriptsize $\pm$ 3.83} & 20.05 {\scriptsize $\pm$ 1.12} & 60.04 {\scriptsize $\pm$ 4.91} & 24.07 {\scriptsize $\pm$ 1.12} & 35.35 {\scriptsize $\pm$ 2.85} & 1.07 {\scriptsize $\pm$ 0.02} \\
$\phi_M$ (Ours)  & \textbf{82.67 {\scriptsize $\pm$ 0.57}} & \textbf{0.02 {\scriptsize $\pm$ 0.00}} & \textbf{85.32 {\scriptsize $\pm$ 2.96}} & \textbf{0.14 {\scriptsize $\pm$ 0.01}} & \textbf{81.36 {\scriptsize $\pm$ 2.12}} & \textbf{0.61 {\scriptsize $\pm$ 0.10}} & \textbf{73.81 {\scriptsize $\pm$ 2.26}} & \textbf{0.44 {\scriptsize $\pm$ 0.00}} & \textbf{74.16 {\scriptsize $\pm$ 6.75}} & \textbf{0.02 {\scriptsize $\pm$ 0.00}} \\
\bottomrule
\end{tabular}%
}
\end{minipage}

\vspace{15pt}

\begin{minipage}[t]{0.58\textwidth}
\centering
\captionof{table}{Data-grounded fidelity $Fid_D$ (\%) on \emph{large-scale} real-world datasets. \texttt{TO} indicates that the method did not complete within the allocated time limit of 12 hours. $\phi_M$ shows superior scalability compared to baseline methods.}
\label{tab:large_fid_res}

\vspace{0.5em} 

\resizebox{\textwidth}{!}{%
\begin{tabular}{@{}l rr rr rr @{}}
\toprule
& \multicolumn{2}{c}{Reddit} & \multicolumn{2}{c}{Twitch} & \multicolumn{2}{c}{Github} \\
\cmidrule(lr){2-3} \cmidrule(lr){4-5} \cmidrule(lr){6-7}
Method & $Fid_D \uparrow$ & Time $\downarrow$ & $Fid_D \uparrow$ & Time $\downarrow$ & $Fid_D \uparrow$ & Time $\downarrow$ \\
\midrule
GLG    & --- & TO & --- & TO & --- & TO \\
GTrail & --- & TO & --- & TO & --- & TO \\
$\phi_M$ (Ours)
       & 87.39 {\scriptsize $\pm$ 0.59} & 4.04 {\scriptsize $\pm$ 0.20}
       & 59.71 {\scriptsize $\pm$ 0.92} & 7.27 {\scriptsize $\pm$ 0.22}
       & 65.01 {\scriptsize $\pm$ 2.59} & 12.17 {\scriptsize $\pm$ 1.91} \\
\bottomrule
\end{tabular}%
}
\end{minipage}
\hfill
\begin{minipage}[t]{0.40\textwidth}
\centering

\vspace{-5pt} 

\includegraphics[width=\textwidth]{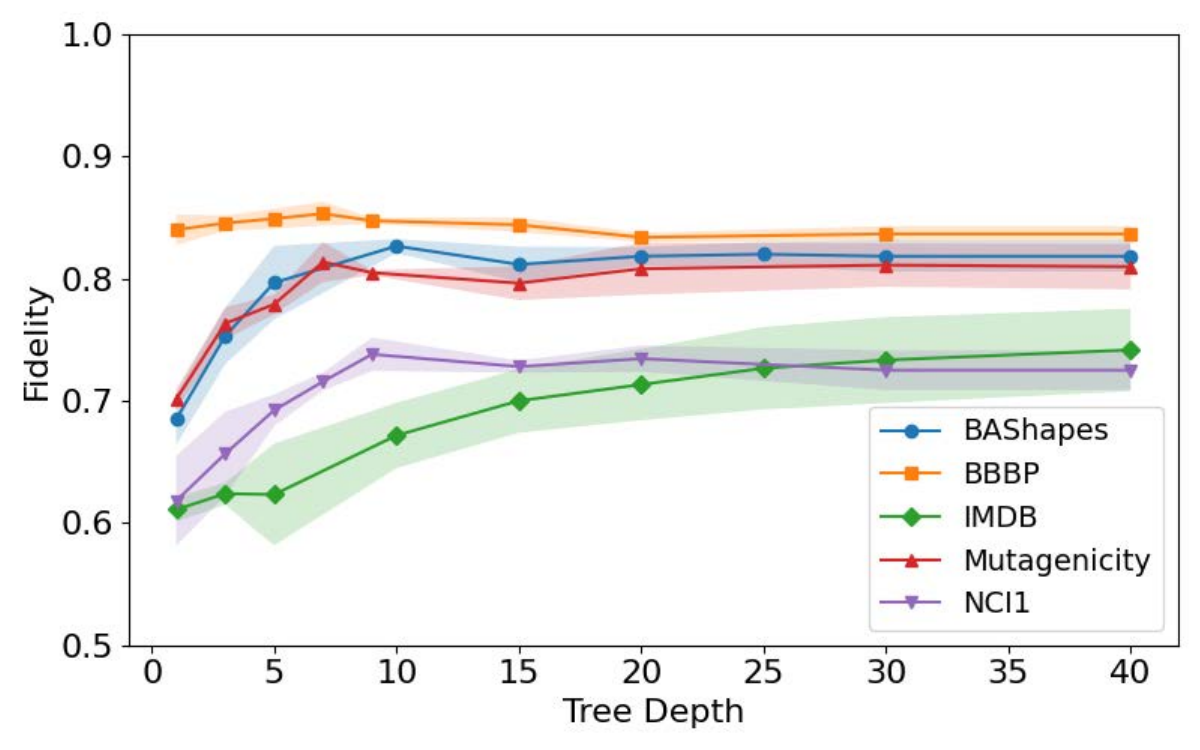}
\captionof{figure}{Impact of tree depth on $Fid_D$.}
\label{fig:k_plot}
\end{minipage}

\end{figure*}

\subsection{Does \( \phi_M \) provide better explanations than existing methods?}
\label{sec:exp_quality}

\begin{figure*}[t]
    \centering
    \subfigure[\textsc{GLG} learned no rules or conflicted rules.]{%
        \includegraphics[width=0.48\linewidth]{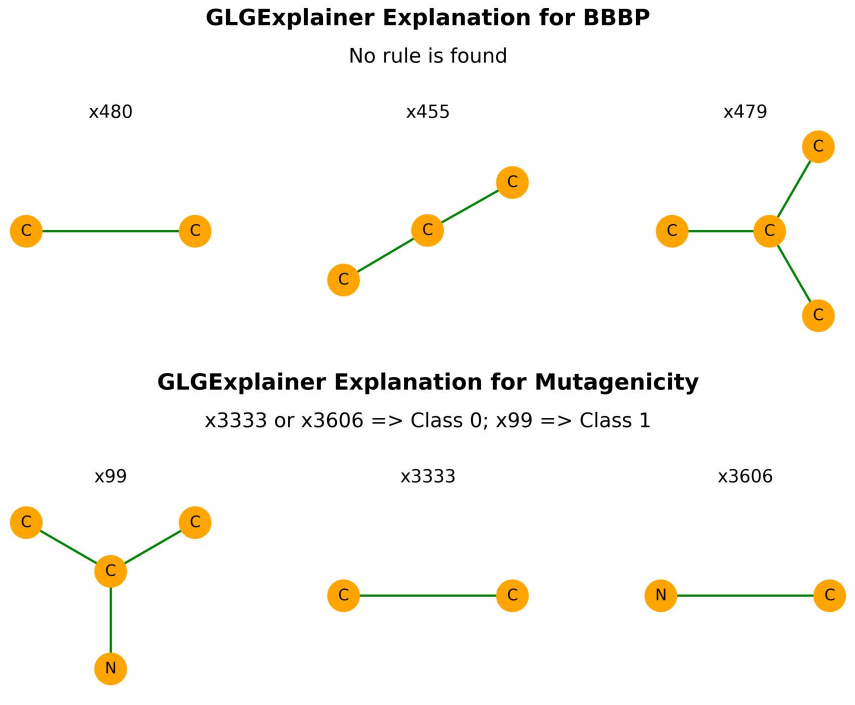}
        \label{fig:top-left-base}
    }
    \subfigure[\textsc{GTrail} learned invalid chemical subgraphs.]{%
        \includegraphics[width=0.48\linewidth]{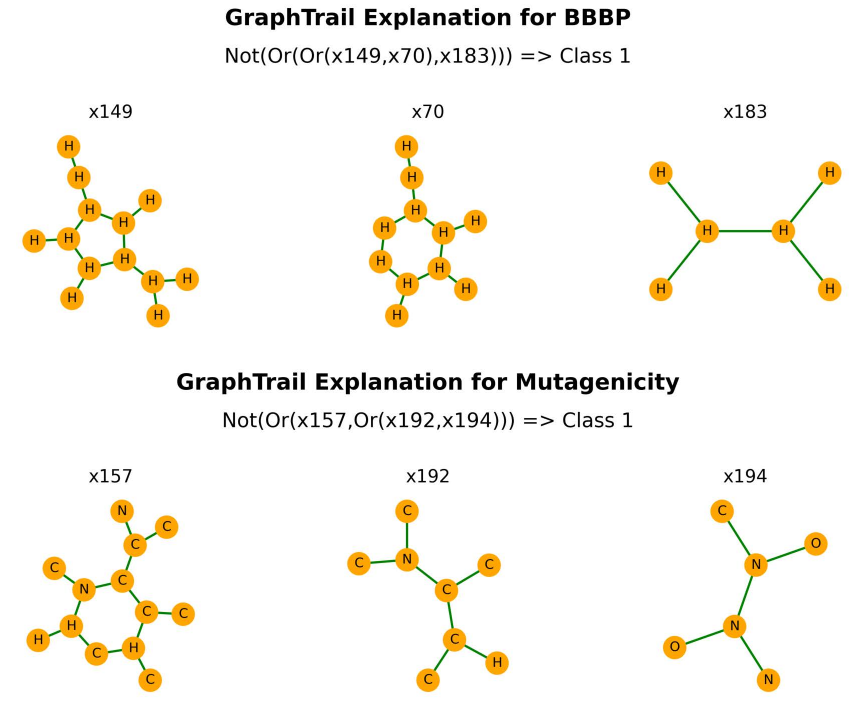}
        \label{fig:top-right-base}
    }
    \caption{Baselines' explanations exhibit conflicting rules and chemically invalid subgraphs.}
    \label{fig:baseline_exp}

    \vspace{2em} 

    \subfigure[
    \begin{tabular}{@{}l@{}}
    $(\neg p_{3} \land p_{7}) \lor (p_{3} \land p_{27}) \Rightarrow \text{BBBP class } 0$ \\
    $(\neg p_{3} \land \neg p_{7}) \lor (p_{3} \land \neg p_{27}) \Rightarrow \text{BBBP class } 1$
    \end{tabular}
    ]{%
        \includegraphics[width=0.48\linewidth]{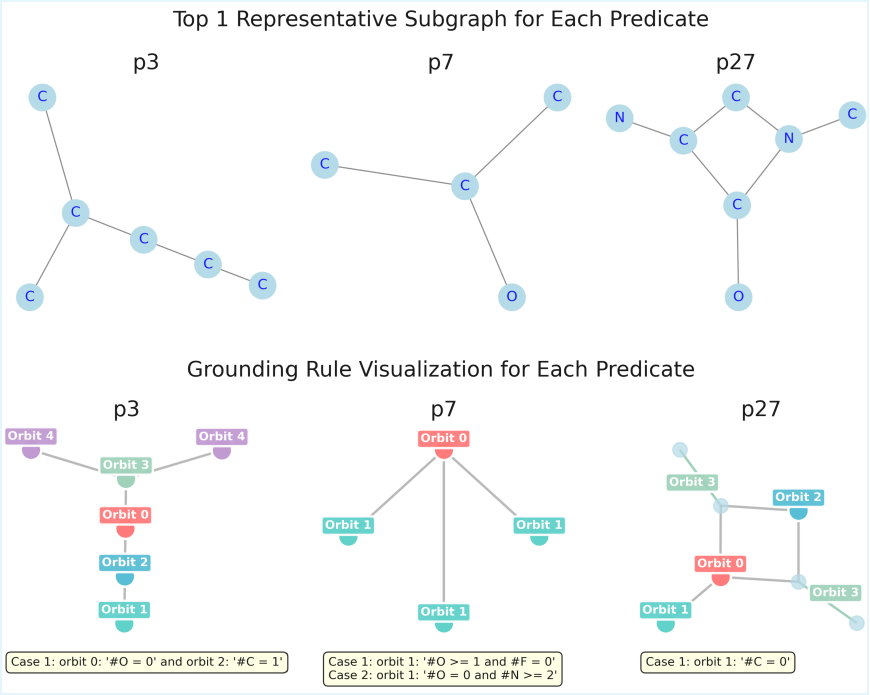}
        \label{fig:top-left-our}
    }
    \subfigure[
    \begin{tabular}{@{}l@{}}
    $(\neg p_{2} \land p_{20}) \lor (p_{2} \land \neg p_{38}) \Rightarrow \text{Mutag. class } 0$ \\
    $(\neg p_{2} \land \neg p_{20}) \lor (p_{2} \land p_{38}) \Rightarrow \text{Mutag. class } 1$
    \end{tabular}
    ]{%
    \includegraphics[width=0.48\linewidth]{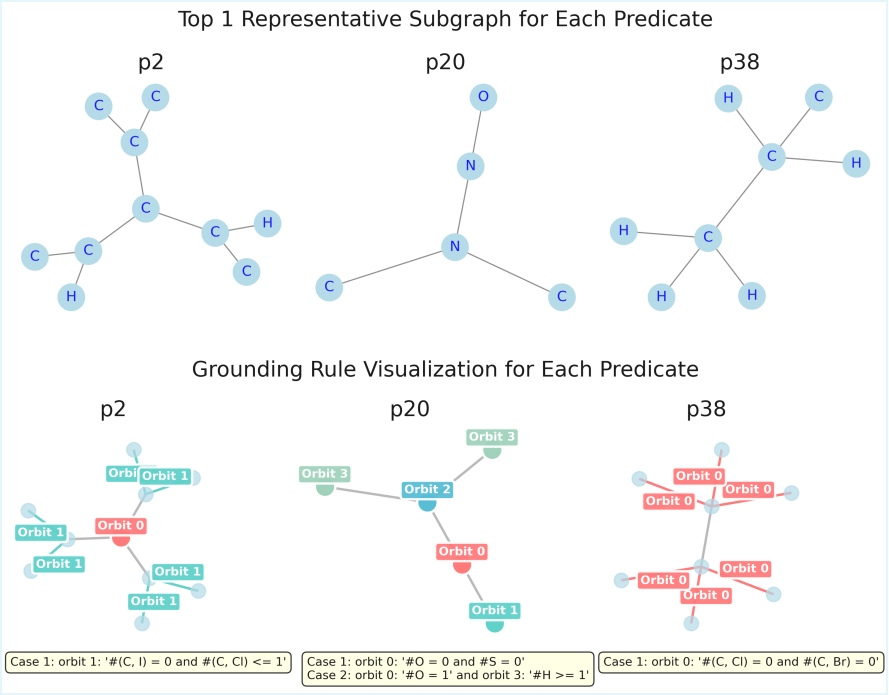}
        \label{fig:top-right-our}
    }
    \caption{Besides representative subgraphs, our approach \(\phi_M\) also provides general grounding rules for each predicate, effectively capturing more of the model's behavior, thereby achieving high fidelity.}
    \label{fig:our_exp}
\end{figure*}

To assess the quality of our generated explanations, we conduct both qualitative and quantitative evaluations. Figures~\ref{fig:baseline_exp} and~\ref{fig:our_exp} provide a direct comparison of explanations from baseline approaches and \( \phi_M \) on the datasets Mutagenicity and BBBP~\citep{wu2018moleculenetbenchmarkmolecularmachine}. Here, we use simplified rules for the baselines and \( \phi_M \) (we set the tree depth for \( \phi_M \) to 2 to achieve comparable rule complexity) for clearer visualization. This does not affect the nature of all methods. We choose these molecular datasets as they represent a domain that demands explanations with real-world scientific utility rather than mere subjective interpretability. An extended set of explanations is provided in Appendix~\ref{sec:more_examples}.

We identify several issues with both baselines. First, \textsc{GLGExplainer} often generates conflicting rules. For example, in its rules for Mutagenicity, $x_{3333}$ is a subgraph of $x_{99}$, yet they correspond to different classes. This creates ambiguity because both class rules can be simultaneously satisfied. Consequently, \textsc{GLGExplainer} reports a very low $Fid_{\mathcal{D}}$ of around 38.98\%, as shown in Table~\ref{tab:fid_res}. In some cases, it fails to yield any rules, as in BBBP. On the other hand, \textsc{GraphTrail} consistently produces chemically invalid motifs, such as $x_{149}$ and $x_{40}$ in BBBP. Moreover, since it generates only unilateral rules, all class 0 instances are trivially explained correctly, as they simply do not match these invalid subgraphs. Although \textsc{GraphTrail} appears to achieve a higher $Fid_{\mathcal{D}}$ than \textsc{GLGExplainer}, such explanations remain largely meaningless to end users. In contrast, \( \phi_M \) not only generates accurate representative subgraphs but also provides detailed general grounding rules for each predicate, resulting in a significantly higher $Fid_{\mathcal{D}}$ than the baselines.  Moreover, our final explanations are expressed in DNF form, offering better readability than those of \textsc{GraphTrail}.

\begin{table*}[b] 
\centering

\begin{minipage}{0.52\textwidth}
\centering
\captionof{table}{\emph{Coverage} of constructive explanations.}
\label{tab:coverage_results}
\resizebox{\textwidth}{!}{%
\begin{tabular}{@{}l rr rr @{}}
\toprule
& \multicolumn{2}{c}{Mutagenicity} & \multicolumn{2}{c}{BBBP}  \\
\cmidrule(lr){2-3} \cmidrule(lr){4-5}
Method & Class 0 (\%)  & Class 1 (\%) & Class 0 (\%)  & Class 1 (\%) \\
\midrule
\textsc{GLG} & 6.11 {\scriptsize $\pm$ 4.17} & 75.03 {\scriptsize $\pm$ 7.12} & --- & ---  \\
\textsc{GTrail} & 0.00 {\scriptsize $\pm$ 0.00} & 78.48 {\scriptsize $\pm$ 1.04} & 0.00 {\scriptsize $\pm$ 0.00} & 0.00 {\scriptsize $\pm$ 0.00}  \\
$\phi_M$ (Ours) & \textbf{80.06 {\scriptsize $\pm$ 3.19}} & \textbf{82.58 {\scriptsize $\pm$ 1.35}} & \textbf{47.86 {\scriptsize $\pm$ 4.85}} & \textbf{98.16 {\scriptsize $\pm$ 0.57}} \\
\bottomrule
\end{tabular}%
}
\end{minipage}%
\hfill
\begin{minipage}{0.45\textwidth}
\centering
\captionof{table}{\emph{Stability} and \emph{Validity}.}
\label{tab:stability_validity_results}
\resizebox{\textwidth}{!}{%
\begin{tabular}{@{}l rr rr @{}}
\toprule
& \multicolumn{2}{c}{Stability(\%)} & \multicolumn{2}{c}{Validity(\%)}  \\
\cmidrule(lr){2-3} \cmidrule(lr){4-5}
Method & Mutag. & BBBP & Mutag. & BBBP \\
\midrule
\textsc{GLG} & 40.00 & --- & \textbf{100.00 {\scriptsize $\pm$ 0.00}} & ---  \\
\textsc{GTrail} & 37.50 & 20.00 & 61.90 {\scriptsize $\pm$ 6.73} & 0.00 {\scriptsize $\pm$ 0.00} \\
$\phi_M$ (Ours) & \textbf{66.67} & \textbf{60.00} & \textbf{100.00 {\scriptsize $\pm$ 0.00}} & \textbf{100.00 {\scriptsize $\pm$ 0.00}} \\
\bottomrule
\end{tabular}%
}
\end{minipage}

\end{table*}

To complement this qualitative analysis, we quantitatively evaluate the generated explanations using a set of objective metrics that reflect their practical utility for end users: (1) \emph{Coverage:} The proportion of target-class instances where the rule-based prediction remains correct when restricted to only valid subgraph patterns (i.e., after removing all invalid patterns).  (2) \emph{Stability:} The consistency of explanation subgraphs across multiple runs. (3) \emph{Validity:} The proportion of explanation subgraphs corresponding to valid chemical fragments in the dataset. Additional details on these metrics are provided in Appendix~\ref{sec:utility_metrics}. All results are computed and reported over three seeds in Tables~\ref{tab:coverage_results} and~\ref{tab:stability_validity_results}.

Note that our approach consistently outperforms all baselines across these metrics. The high coverage indicates that our method provides meaningful explanations to more instances, while higher stability suggests more reliable, reproducible explanations. The 100\% validity scores further confirm that our explanations correspond to chemically meaningful substructures, making them interpretable and trustworthy for domain experts. Appendix~\ref{sec:further_analysis} provides additional analyses, which corroborate our qualitative findings and validate the effectiveness of our approach for high-quality graph explanations.

\paragraph{Validation on Synthetic Benchmarks.}
We evaluate $\phi_M$'s rule-learning capability using the BAMultiShapes dataset. Despite substantial noise in the underlying graph structures, our method accurately recovers the governing logical rules. Specifically, the ground-truth rule for Class 1 is the Disjunctive Normal Form: $(H \land W) \lor (H \land G) \lor (W \land G) \Rightarrow \text{Class 1}$, where $H, W, \text{and } G$ represent House, Wheel, and Grid motifs. As shown in Figure~\ref{fig:BASHAPE} (Appendix~\ref{sec:more_examples}), our model extracts clauses mapping directly to these logical components; for instance, at rule depth 5, clauses $(p_{52} \land p_{55})$ and $(p_{280} \land p_{113})$ correspond to $(H \land W)$ and $(W \land G)$. The complete rule is fully recovered at depth 10. In contrast, state-of-the-art baselines, including \textsc{GLGExplainer} and \textsc{GraphTrail}, fail to extract comparable rules on this benchmark (see Figure~5 in \citep{GraphTrail}).


\section{Related Work}
\label{sec:related} 

Explainability methods for Graph Neural Networks (GNNs) can be broadly categorized into local and global approaches. A significant portion of prior work has focused on local explanations, which provide input attribution scores for a single prediction \citep{pope2019, ying2019, luo2020, vu2020, lucic2022, tan2022}. These methods identify the most critical nodes and edges for a given decision, analogous to attribution techniques like Grad-CAM \citep{gradcam} used in computer vision. In contrast, global explanations aim to capture the model's overall behavior, primarily through two strategies. (Sub)graph generation-base methods seek to identify representative graph patterns that are highly indicative of a particular class \citep{yuan2020, GNNInterpreter, GCneuron, GNNBoundary, Graphon-Explainer, DAGExplainer, MAGE}. Logical rule-based methods, however, offer more expressive and human-readable explanations by using subgraphs as interpretable concepts within a formal logical formula. Our proposed method, \textsc{LogicXGNN}, operates within this advanced domain of global rule-based explanations, aiming to generate precise and interpretable rules that clearly describe a GNN's decision-making process. Another related line of work involves self-explainable GNNs, which aim to develop model architectures that are inherently interpretable by design \citep{DBLP:conf/cikm/DaiW21, DBLP:journals/corr/abs-2508-11513,ragno2022prototype}. These methods are not directly compared in our work as they address a different goal, building interpretable models from scratch, whereas our focus is on providing post-hoc explanations for any pre-trained GNN. We believe that generalizing our rule-based framework to the domain of self-explainable models is a promising direction for future research.

\section{Conclusion}
\label{sec:conclusion}

In this work, we identify a fundamental limitation in existing rule-based explanation methods for GNNs: they optimize and evaluate fidelity in an intermediate, uninterpretable concept space while neglecting the grounding quality of final subgraph explanations presented to end users. This disconnect undermines both usability and trustworthiness, as methods often produce explanations that appear highly faithful yet fail to reflect concrete patterns in the data. To address this critical gap, we propose \textsc{LogicXGNN}, a novel framework that constructs explanation rules over reliable predicates designed to preserve structural patterns inherent in GNN's message-passing mechanism. Our approach enables effective grounding that produces representative subgraphs and learns generalizable grounding rules. \textsc{LogicXGNN} achieves an average improvement of over 20\% in data-grounded fidelity $\textit{Fid}_{\mathcal{D}}$ while delivering 10–100$\times$ computational speedup compared to existing methods. Comprehensive evaluation across coverage, stability, and validity metrics confirms that \textsc{LogicXGNN} produces explanations with genuine practical utility, significantly advancing trustworthy GNN explainability. For future work, we aim to enhance explanation interpretability by integrating domain knowledge, particularly in biochemistry, to uncover novel structure-activity relationships within complex molecular datasets.


\newpage
\newpage

\section*{Acknowledgments}

This work was supported, in part, by Individual Discovery Grants from the Natural Sciences and Engineering Research Council (NSERC) of Canada and the Canada CIFAR AI Chair Program. This research was also supported by the ProML project, a joint initiative funded by NSERC and the French National Research Agency (ANR) under the reference ANR-25-CE23-6715.

\bibliography{main}

@article{mutag,
author = {Debnath, Asim Kumar and Lopez de Compadre, Rosa L. and Debnath, Gargi and Shusterman, Alan J. and Hansch, Corwin},
title = {Structure-activity relationship of mutagenic aromatic and heteroaromatic nitro compounds. Correlation with molecular orbital energies and hydrophobicity},
journal = {Journal of Medicinal Chemistry},
volume = {34},
number = {2},
pages = {786-797},
year = {1991},
doi = {10.1021/jm00106a046},

URL = { 
    
        https://doi.org/10.1021/jm00106a046
    
    

},
eprint = { 
    
        https://doi.org/10.1021/jm00106a046
    
    

}

}

@article{amann2020,
  author    = {Julia Amann and Alessandro Blasimme and Effy Vayena and Dietmar Frey and Vince I. Madai and Precise4Q consortium},
  title     = {Explainability for Artificial Intelligence in Healthcare: A Multidisciplinary Perspective},
  journal   = {BMC Medical Informatics and Decision Making},
  volume    = {20},
  number    = {1},
  pages     = {310},
  year      = {2020},
  month     = {Nov},
  day       = {30},
  doi       = {10.1186/s12911-020-01332-6},
  pmid      = {33256715},
  pmcid     = {PMC7706019},
  url       = {https://doi.org/10.1186/s12911-020-01332-6},
}

@inproceedings{tan2022,
  author       = {Juntao Tan and
                  Shijie Geng and
                  Zuohui Fu and
                  Yingqiang Ge and
                  Shuyuan Xu and
                  Yunqi Li and
                  Yongfeng Zhang},
  editor       = {Fr{\'{e}}d{\'{e}}rique Laforest and
                  Rapha{\"{e}}l Troncy and
                  Elena Simperl and
                  Deepak Agarwal and
                  Aristides Gionis and
                  Ivan Herman and
                  Lionel M{\'{e}}dini},
  title        = {Learning and Evaluating Graph Neural Network Explanations based on
                  Counterfactual and Factual Reasoning},
  booktitle    = {{WWW} '22: The {ACM} Web Conference 2022, Virtual Event, Lyon, France,
                  April 25 - 29, 2022},
  pages        = {1018--1027},
  publisher    = {{ACM}},
  year         = {2022},
  url          = {https://doi.org/10.1145/3485447.3511948},
  doi          = {10.1145/3485447.3511948},
  timestamp    = {Sun, 19 Jan 2025 13:10:16 +0100},
  biburl       = {https://dblp.org/rec/conf/www/TanGFGX0Z22.bib},
  bibsource    = {dblp computer science bibliography, https://dblp.org}
}

@inproceedings{GLGExplainer,
  author       = {Steve Azzolin and
                  Antonio Longa and
                  Pietro Barbiero and
                  Pietro Li{\`{o}} and
                  Andrea Passerini},
  title        = {Global Explainability of GNNs via Logic Combination of Learned Concepts},
  booktitle    = {The Eleventh International Conference on Learning Representations,
                  {ICLR} 2023, Kigali, Rwanda, May 1-5, 2023},
  publisher    = {OpenReview.net},
  year         = {2023},
  url          = {https://openreview.net/forum?id=OTbRTIY4YS},
  timestamp    = {Wed, 24 Jul 2024 16:50:34 +0200},
  biburl       = {https://dblp.org/rec/conf/iclr/AzzolinLBLP23.bib},
  bibsource    = {dblp computer science bibliography, https://dblp.org}
}

@misc{vu2020,
      title={PGM-Explainer: Probabilistic Graphical Model Explanations for Graph Neural Networks}, 
      author={Minh N. Vu and My T. Thai},
      year={2020},
      eprint={2010.05788},
      archivePrefix={arXiv},
      primaryClass={cs.LG},
      url={https://arxiv.org/abs/2010.05788}, 
}

@misc{luo2020,
      title={Parameterized Explainer for Graph Neural Network}, 
      author={Dongsheng Luo and Wei Cheng and Dongkuan Xu and Wenchao Yu and Bo Zong and Haifeng Chen and Xiang Zhang},
      year={2020},
      eprint={2011.04573},
      archivePrefix={arXiv},
      primaryClass={cs.LG},
      url={https://arxiv.org/abs/2011.04573}, 
}

@inproceedings{yuan2020,
   title={XGNN: Towards Model-Level Explanations of Graph Neural Networks},
   url={http://dx.doi.org/10.1145/3394486.3403085},
   DOI={10.1145/3394486.3403085},
   booktitle={Proceedings of the 26th ACM SIGKDD International Conference on Knowledge Discovery \&; Data Mining},
   publisher={ACM},
   author={Yuan, Hao and Tang, Jiliang and Hu, Xia and Ji, Shuiwang},
   year={2020},
   month=aug, pages={430–438},
   collection={KDD ’20} }

@inproceedings{GraphTrail,
  author       = {Burouj Armgaan and
                  Manthan Dalmia and
                  Sourav Medya and
                  Sayan Ranu},
  editor       = {Amir Globersons and
                  Lester Mackey and
                  Danielle Belgrave and
                  Angela Fan and
                  Ulrich Paquet and
                  Jakub M. Tomczak and
                  Cheng Zhang},
  title        = {GraphTrail: Translating {GNN} Predictions into Human-Interpretable
                  Logical Rules},
  booktitle    = {Advances in Neural Information Processing Systems 38: Annual Conference
                  on Neural Information Processing Systems 2024, NeurIPS 2024, Vancouver,
                  BC, Canada, December 10 - 15, 2024},
  year         = {2024},
  url          = {http://papers.nips.cc/paper\_files/paper/2024/hash/df2d51e1d3e899241c5c4c779c1d509f-Abstract-Conference.html},
  timestamp    = {Thu, 13 Feb 2025 16:56:45 +0100},
  biburl       = {https://dblp.org/rec/conf/nips/ArmgaanDMR24.bib},
  bibsource    = {dblp computer science bibliography, https://dblp.org}
}

@inproceedings{PGExplainer,
  author       = {Dongsheng Luo and
                  Wei Cheng and
                  Dongkuan Xu and
                  Wenchao Yu and
                  Bo Zong and
                  Haifeng Chen and
                  Xiang Zhang},
  editor       = {Hugo Larochelle and
                  Marc'Aurelio Ranzato and
                  Raia Hadsell and
                  Maria{-}Florina Balcan and
                  Hsuan{-}Tien Lin},
  title        = {Parameterized Explainer for Graph Neural Network},
  booktitle    = {Advances in Neural Information Processing Systems 33: Annual Conference
                  on Neural Information Processing Systems 2020, NeurIPS 2020, December
                  6-12, 2020, virtual},
  year         = {2020},
  url          = {https://proceedings.neurips.cc/paper/2020/hash/e37b08dd3015330dcbb5d6663667b8b8-Abstract.html},
  timestamp    = {Tue, 19 Jan 2021 15:57:16 +0100},
  biburl       = {https://dblp.org/rec/conf/nips/LuoCXYZC020.bib},
  bibsource    = {dblp computer science bibliography, https://dblp.org}
}

@inproceedings{GNNInterpreter,
  author       = {Xiaoqi Wang and
                  Han{-}Wei Shen},
  title        = {GNNInterpreter: {A} Probabilistic Generative Model-Level Explanation
                  for Graph Neural Networks},
  booktitle    = {The Eleventh International Conference on Learning Representations,
                  {ICLR} 2023, Kigali, Rwanda, May 1-5, 2023},
  publisher    = {OpenReview.net},
  year         = {2023},
  url          = {https://openreview.net/forum?id=rqq6Dh8t4d},
  timestamp    = {Wed, 24 Jul 2024 16:50:34 +0200},
  biburl       = {https://dblp.org/rec/conf/iclr/WangS23.bib},
  bibsource    = {dblp computer science bibliography, https://dblp.org}
}

@inproceedings{gradcam,
  author       = {Ramprasaath R. Selvaraju and
                  Michael Cogswell and
                  Abhishek Das and
                  Ramakrishna Vedantam and
                  Devi Parikh and
                  Dhruv Batra},
  title        = {Grad-CAM: Visual Explanations from Deep Networks via Gradient-Based
                  Localization},
  booktitle    = {{IEEE} International Conference on Computer Vision, {ICCV} 2017, Venice,
                  Italy, October 22-29, 2017},
  pages        = {618--626},
  publisher    = {{IEEE} Computer Society},
  year         = {2017},
  url          = {https://doi.org/10.1109/ICCV.2017.74},
  doi          = {10.1109/ICCV.2017.74},
  timestamp    = {Thu, 23 Mar 2023 23:57:42 +0100},
  biburl       = {https://dblp.org/rec/conf/iccv/SelvarajuCDVPB17.bib},
  bibsource    = {dblp computer science bibliography, https://dblp.org}
}

@inproceedings{GCneuron,
  author       = {Han Xuanyuan and
                  Pietro Barbiero and
                  Dobrik Georgiev and
                  Lucie Charlotte Magister and
                  Pietro Li{\`{o}}},
  editor       = {Brian Williams and
                  Yiling Chen and
                  Jennifer Neville},
  title        = {Global Concept-Based Interpretability for Graph Neural Networks via
                  Neuron Analysis},
  booktitle    = {Thirty-Seventh {AAAI} Conference on Artificial Intelligence, {AAAI}
                  2023, Thirty-Fifth Conference on Innovative Applications of Artificial
                  Intelligence, {IAAI} 2023, Thirteenth Symposium on Educational Advances
                  in Artificial Intelligence, {EAAI} 2023, Washington, DC, USA, February
                  7-14, 2023},
  pages        = {10675--10683},
  publisher    = {{AAAI} Press},
  year         = {2023},
  url          = {https://doi.org/10.1609/aaai.v37i9.26267},
  doi          = {10.1609/AAAI.V37I9.26267},
  timestamp    = {Mon, 04 Sep 2023 16:50:28 +0200},
  biburl       = {https://dblp.org/rec/conf/aaai/XuanyuanBGML23.bib},
  bibsource    = {dblp computer science bibliography, https://dblp.org}
}

@article{xiong2021, title={Graph neural networks for automated de novo drug design}, author={Xiong, Jiacheng and Xiong, Zhaoping and Chen, Kaixian and Jiang, Hualiang and Zheng, Mingyue}, journal={Drug Discovery Today}, volume={26}, number={6}, pages={1382--1393}, year={2021}, publisher={Elsevier}, doi={10.1016/j.drudis.2021.02.011}, pmid={33609779} }

@inproceedings{
liu2022,
title={Interpretable Chirality-Aware Graph Neural Network for Quantitative Structure Activity Relationship Modeling},
author={Yunchao Liu and Yu Wang and Oanh Vu and Rocco Moretti and Bobby Bodenheimer and Jens Meiler and Tyler Derr},
booktitle={The First Learning on Graphs Conference},
year={2022},
url={https://openreview.net/forum?id=W2OStztdMhc}
}

@article{sun2020,
  title={Graph convolutional networks for computational drug development and discovery},
  author={Sun, Mengying and Zhao, Sendong and Gilvary, Coryandar and Elemento, Olivier and Zhou, Jiayu and Wang, Fei},
  journal={Briefings in Bioinformatics},
  volume={21},
  number={3},
  pages={919--935},
  year={2020},
  publisher={Oxford Academic},
  doi={10.1093/bib/bbz042},
  pmid={31155636}
}

@misc{chen2022,
      title={GREASE: Generate Factual and Counterfactual Explanations for GNN-based Recommendations}, 
      author={Ziheng Chen and Fabrizio Silvestri and Jia Wang and Yongfeng Zhang and Zhenhua Huang and Hongshik Ahn and Gabriele Tolomei},
      year={2022},
      eprint={2208.04222},
      archivePrefix={arXiv},
      primaryClass={cs.IR},
      url={https://arxiv.org/abs/2208.04222}, 
}

@article{rao2021,
   title={xFraud: explainable fraud transaction detection},
   volume={15},
   ISSN={2150-8097},
   url={http://dx.doi.org/10.14778/3494124.3494128},
   DOI={10.14778/3494124.3494128},
   number={3},
   journal={Proceedings of the VLDB Endowment},
   publisher={Association for Computing Machinery (ACM)},
   author={Rao, Susie Xi and Zhang, Shuai and Han, Zhichao and Zhang, Zitao and Min, Wei and Chen, Zhiyao and Shan, Yinan and Zhao, Yang and Zhang, Ce},
   year={2021},
   month=nov, pages={427–436} }

@article{NCI1,
  title={Comparison of descriptor spaces for chemical compound retrieval and classification},
  author={Wale, Nikil and Watson, Ian A and Karypis, George},
  journal={Knowledge and Information Systems},
  volume={14},
  pages={347--375},
  year={2008},
  publisher={Springer}
}

@misc{hamilton2018,
      title={Inductive Representation Learning on Large Graphs}, 
      author={William L. Hamilton and Rex Ying and Jure Leskovec},
      year={2018},
      eprint={1706.02216},
      archivePrefix={arXiv},
      primaryClass={cs.SI},
      url={https://arxiv.org/abs/1706.02216}, 
}

@misc{ying2019,
      title={GNNExplainer: Generating Explanations for Graph Neural Networks}, 
      author={Rex Ying and Dylan Bourgeois and Jiaxuan You and Marinka Zitnik and Jure Leskovec},
      year={2019},
      eprint={1903.03894},
      archivePrefix={arXiv},
      primaryClass={cs.LG},
      url={https://arxiv.org/abs/1903.03894}, 
}

@misc{wu2018moleculenetbenchmarkmolecularmachine,
      title={MoleculeNet: A Benchmark for Molecular Machine Learning}, 
      author={Zhenqin Wu and Bharath Ramsundar and Evan N. Feinberg and Joseph Gomes and Caleb Geniesse and Aneesh S. Pappu and Karl Leswing and Vijay Pande},
      year={2018},
      eprint={1703.00564},
      archivePrefix={arXiv},
      primaryClass={cs.LG},
      url={https://arxiv.org/abs/1703.00564}, 
}

@article{bussmann2021,
author = {Bussmann, Niklas and Giudici, Paolo and Marinelli, Dimitri and Papenbrock, Jochen},
title = {Explainable Machine Learning in Credit Risk Management},
year = {2021},
issue_date = {Jan 2021},
publisher = {Kluwer Academic Publishers},
address = {USA},
volume = {57},
number = {1},
issn = {0927-7099},
url = {https://doi.org/10.1007/s10614-020-10042-0},
doi = {10.1007/s10614-020-10042-0},
abstract = {The paper proposes an explainable Artificial Intelligence model that can be used in credit risk management and, in particular, in measuring the risks that arise when credit is borrowed employing peer to peer lending platforms. The model applies correlation networks to Shapley values so that Artificial Intelligence predictions are grouped according to the similarity in the underlying explanations. The empirical analysis of 15,000 small and medium companies asking for credit reveals that both risky and not risky borrowers can be grouped according to a set of similar financial characteristics, which can be employed to explain their credit score and, therefore, to predict their future behaviour.},
journal = {Comput. Econ.},
month = jan,
pages = {203–216},
numpages = {14},
keywords = {Similarity networks, Financial technologies, Explainable AI, Credit risk management}
}

@misc{lucic2022,
      title={CF-GNNExplainer: Counterfactual Explanations for Graph Neural Networks}, 
      author={Ana Lucic and Maartje ter Hoeve and Gabriele Tolomei and Maarten de Rijke and Fabrizio Silvestri},
      year={2022},
      eprint={2102.03322},
      archivePrefix={arXiv},
      primaryClass={cs.LG},
      url={https://arxiv.org/abs/2102.03322}, 
}

@INPROCEEDINGS{pope2019,
  author={Pope, Phillip E. and Kolouri, Soheil and Rostami, Mohammad and Martin, Charles E. and Hoffmann, Heiko},
  booktitle={2019 IEEE/CVF Conference on Computer Vision and Pattern Recognition (CVPR)}, 
  title={Explainability Methods for Graph Convolutional Neural Networks}, 
  year={2019},
  volume={},
  number={},
  pages={10764-10773},
  keywords={Deep Learning;Deep Learning},
  doi={10.1109/CVPR.2019.01103}}

@article{TUDataset,
  author       = {Christopher Morris and
                  Nils M. Kriege and
                  Franka Bause and
                  Kristian Kersting and
                  Petra Mutzel and
                  Marion Neumann},
  title        = {TUDataset: {A} collection of benchmark datasets for learning with
                  graphs},
  journal      = {CoRR},
  volume       = {abs/2007.08663},
  year         = {2020},
  url          = {https://arxiv.org/abs/2007.08663},
  eprinttype    = {arXiv},
  eprint       = {2007.08663},
  timestamp    = {Thu, 14 Oct 2021 09:18:16 +0200},
  biburl       = {https://dblp.org/rec/journals/corr/abs-2007-08663.bib},
  bibsource    = {dblp computer science bibliography, https://dblp.org}
}

@inproceedings{DBLP:conf/aaai/LiHW18,
  author       = {Qimai Li and
                  Zhichao Han and
                  Xiao{-}Ming Wu},
  editor       = {Sheila A. McIlraith and
                  Kilian Q. Weinberger},
  title        = {Deeper Insights Into Graph Convolutional Networks for Semi-Supervised
                  Learning},
  booktitle    = {Proceedings of the Thirty-Second {AAAI} Conference on Artificial Intelligence,
                  (AAAI-18), the 30th innovative Applications of Artificial Intelligence
                  (IAAI-18), and the 8th {AAAI} Symposium on Educational Advances in
                  Artificial Intelligence (EAAI-18), New Orleans, Louisiana, USA, February
                  2-7, 2018},
  pages        = {3538--3545},
  publisher    = {{AAAI} Press},
  year         = {2018},
  url          = {https://doi.org/10.1609/aaai.v32i1.11604},
  doi          = {10.1609/AAAI.V32I1.11604},
  timestamp    = {Sun, 06 Oct 2024 20:55:14 +0200},
  biburl       = {https://dblp.org/rec/conf/aaai/LiHW18.bib},
  bibsource    = {dblp computer science bibliography, https://dblp.org}
}

@inproceedings{GIN_HOW,
  author       = {Keyulu Xu and
                  Weihua Hu and
                  Jure Leskovec and
                  Stefanie Jegelka},
  title        = {How Powerful are Graph Neural Networks?},
  booktitle    = {7th International Conference on Learning Representations, {ICLR} 2019,
                  New Orleans, LA, USA, May 6-9, 2019},
  publisher    = {OpenReview.net},
  year         = {2019},
  url          = {https://openreview.net/forum?id=ryGs6iA5Km},
  timestamp    = {Thu, 25 Jul 2019 13:03:15 +0200},
  biburl       = {https://dblp.org/rec/conf/iclr/XuHLJ19.bib},
  bibsource    = {dblp computer science bibliography, https://dblp.org}
}

@inproceedings{GAT,
  author       = {Petar Velickovic and
                  Guillem Cucurull and
                  Arantxa Casanova and
                  Adriana Romero and
                  Pietro Li{\`{o}} and
                  Yoshua Bengio},
  title        = {Graph Attention Networks},
  booktitle    = {6th International Conference on Learning Representations, {ICLR} 2018,
                  Vancouver, BC, Canada, April 30 - May 3, 2018, Conference Track Proceedings},
  publisher    = {OpenReview.net},
  year         = {2018},
  url          = {https://openreview.net/forum?id=rJXMpikCZ},
  timestamp    = {Thu, 25 Jul 2019 14:25:41 +0200},
  biburl       = {https://dblp.org/rec/conf/iclr/VelickovicCCRLB18.bib},
  bibsource    = {dblp computer science bibliography, https://dblp.org}
}

@article{inter_survey,
  author       = {Yu Zhang and
                  Peter Ti{\~{n}}o and
                  Ales Leonardis and
                  Ke Tang},
  title        = {A Survey on Neural Network Interpretability},
  journal      = {{IEEE} Trans. Emerg. Top. Comput. Intell.},
  volume       = {5},
  number       = {5},
  pages        = {726--742},
  year         = {2021},
  url          = {https://doi.org/10.1109/TETCI.2021.3100641},
  doi          = {10.1109/TETCI.2021.3100641},
  timestamp    = {Fri, 30 Dec 2022 11:13:35 +0100},
  biburl       = {https://dblp.org/rec/journals/tetci/ZhangTLT21.bib},
  bibsource    = {dblp computer science bibliography, https://dblp.org}
}

@misc{geng2025,
      title={Decoding Interpretable Logic Rules from Neural Networks}, 
      author={Chuqin Geng and Xiaojie Xu and Zhaoyue Wang and Ziyu Zhao and Xujie Si},
      year={2025},
      eprint={2501.08281},
      archivePrefix={arXiv},
      primaryClass={cs.LG},
      url={https://arxiv.org/abs/2501.08281}, 
}

@inproceedings{gcn,
  author       = {Thomas N. Kipf and
                  Max Welling},
  title        = {Semi-Supervised Classification with Graph Convolutional Networks},
  booktitle    = {5th International Conference on Learning Representations, {ICLR} 2017,
                  Toulon, France, April 24-26, 2017, Conference Track Proceedings},
  publisher    = {OpenReview.net},
  year         = {2017},
  url          = {https://openreview.net/forum?id=SJU4ayYgl},
  timestamp    = {Thu, 25 Jul 2019 14:25:55 +0200},
  biburl       = {https://dblp.org/rec/conf/iclr/KipfW17.bib},
  bibsource    = {dblp computer science bibliography, https://dblp.org}
}

@article{BBBP,
  title={A Bayesian approach to in silico blood-brain barrier penetration modeling},
  author={Martins, Ines Filipa and Teixeira, Ana L and Pinheiro, Luis and Falcao, Andre O},
  journal={Journal of chemical information and modeling},
  volume={52},
  number={6},
  pages={1686--1697},
  year={2012},
  publisher={ACS Publications}
}

@inproceedings{UnjustifiedCounterfactualExplanations,
  author       = {Thibault Laugel and
                  Marie{-}Jeanne Lesot and
                  Christophe Marsala and
                  Xavier Renard and
                  Marcin Detyniecki},
  editor       = {Sarit Kraus},
  title        = {The Dangers of Post-hoc Interpretability: Unjustified Counterfactual
                  Explanations},
  booktitle    = {Proceedings of the Twenty-Eighth International Joint Conference on
                  Artificial Intelligence, {IJCAI} 2019, Macao, China, August 10-16,
                  2019},
  pages        = {2801--2807},
  publisher    = {ijcai.org},
  year         = {2019},
  url          = {https://doi.org/10.24963/ijcai.2019/388},
  doi          = {10.24963/IJCAI.2019/388},
  timestamp    = {Tue, 15 Oct 2024 16:43:28 +0200},
  biburl       = {https://dblp.org/rec/conf/ijcai/LaugelLMRD19.bib},
  bibsource    = {dblp computer science bibliography, https://dblp.org}
}

@article{DBLP:journals/cacm/Lipton18,
  author       = {Zachary C. Lipton},
  title        = {The mythos of model interpretability},
  journal      = {Commun. {ACM}},
  volume       = {61},
  number       = {10},
  pages        = {36--43},
  year         = {2018},
  url          = {https://doi.org/10.1145/3233231},
  doi          = {10.1145/3233231},
  timestamp    = {Sun, 19 Jan 2025 14:03:16 +0100},
  biburl       = {https://dblp.org/rec/journals/cacm/Lipton18.bib},
  bibsource    = {dblp computer science bibliography, https://dblp.org}
}

@inproceedings{karateclub,
    title = {{Karate Club: An API Oriented Open-source Python Framework for Unsupervised Learning on Graphs}},
    author = {Benedek Rozemberczki and Oliver Kiss and Rik Sarkar},
    year = {2020},
    pages = {3125–3132},
    booktitle = {Proceedings of the 29th ACM International Conference on Information and Knowledge Management (CIKM '20)},
    organization = {ACM},
}

@article{pysr,
  author       = {Miles D. Cranmer},
  title        = {Interpretable Machine Learning for Science with PySR and SymbolicRegression.jl},
  journal      = {CoRR},
  volume       = {abs/2305.01582},
  year         = {2023},
  url          = {https://doi.org/10.48550/arXiv.2305.01582},
  doi          = {10.48550/ARXIV.2305.01582},
  eprinttype    = {arXiv},
  eprint       = {2305.01582},
  timestamp    = {Tue, 07 May 2024 20:16:18 +0200},
  biburl       = {https://dblp.org/rec/journals/corr/abs-2305-01582.bib},
  bibsource    = {dblp computer science bibliography, https://dblp.org}
}

@inproceedings{DBLP:conf/acl/CamburuSMLB20,
  author       = {Oana{-}Maria Camburu and
                  Brendan Shillingford and
                  Pasquale Minervini and
                  Thomas Lukasiewicz and
                  Phil Blunsom},
  editor       = {Dan Jurafsky and
                  Joyce Chai and
                  Natalie Schluter and
                  Joel R. Tetreault},
  title        = {Make Up Your Mind! Adversarial Generation of Inconsistent Natural
                  Language Explanations},
  booktitle    = {Proceedings of the 58th Annual Meeting of the Association for Computational
                  Linguistics, {ACL} 2020, Online, July 5-10, 2020},
  pages        = {4157--4165},
  publisher    = {Association for Computational Linguistics},
  year         = {2020},
  url          = {https://doi.org/10.18653/v1/2020.acl-main.382},
  doi          = {10.18653/V1/2020.ACL-MAIN.382},
  timestamp    = {Fri, 06 Aug 2021 00:40:55 +0200},
  biburl       = {https://dblp.org/rec/conf/acl/CamburuSMLB20.bib},
  bibsource    = {dblp computer science bibliography, https://dblp.org}
}

@article{Graphon-Explainer,
  author       = {Sayan Saha and
                  Sanghamitra Bandyopadhyay},
  title        = {Graphon-Explainer: Generating Model-Level Explanations for Graph Neural Networks using Graphons},
  journal      = {Trans. Mach. Learn. Res.},
  volume       = {2024},
  year         = {2024},
  url          = {https://openreview.net/forum?id=yHUtuvoIQv},
  timestamp    = {Tue, 17 Jun 2025 16:05:44 +0200},
  biburl       = {https://dblp.org/rec/journals/tmlr/SahaB24.bib},
  bibsource    = {dblp computer science bibliography, https://dblp.org}
}

@inproceedings{GNNBoundary,
  author       = {Xiaoqi Wang and
                  Han{-}Wei Shen},
  title        = {GNNBoundary: Towards Explaining Graph Neural Networks through the
                  Lens of Decision Boundaries},
  booktitle    = {The Twelfth International Conference on Learning Representations,
                  {ICLR} 2024, Vienna, Austria, May 7-11, 2024},
  publisher    = {OpenReview.net},
  year         = {2024},
  url          = {https://openreview.net/forum?id=WIzzXCVYiH},
  timestamp    = {Wed, 07 Aug 2024 17:11:53 +0200},
  biburl       = {https://dblp.org/rec/conf/iclr/WangS24.bib},
  bibsource    = {dblp computer science bibliography, https://dblp.org}
}

@inproceedings{MAGE,
  author       = {Zhaoning Yu and
                  Hongyang Gao},
  title        = {{MAGE:} Model-Level Graph Neural Networks Explanations via Motif-based Graph Generation},
  booktitle    = {The Thirteenth International Conference on Learning Representations,
                  {ICLR} 2025, Singapore, April 24-28, 2025},
  publisher    = {OpenReview.net},
  year         = {2025},
  url          = {https://openreview.net/forum?id=vue9P1Ypk6},
  timestamp    = {Thu, 15 May 2025 17:19:05 +0200},
  biburl       = {https://dblp.org/rec/conf/iclr/YuG25.bib},
  bibsource    = {dblp computer science bibliography, https://dblp.org}
}

@inproceedings{DBLP:conf/cikm/DaiW21,
  author       = {Enyan Dai and
                  Suhang Wang},
  editor       = {Gianluca Demartini and
                  Guido Zuccon and
                  J. Shane Culpepper and
                  Zi Huang and
                  Hanghang Tong},
  title        = {Towards Self-Explainable Graph Neural Network},
  booktitle    = {{CIKM} '21: The 30th {ACM} International Conference on Information
                  and Knowledge Management, Virtual Event, Queensland, Australia, November
                  1 - 5, 2021},
  pages        = {302--311},
  publisher    = {{ACM}},
  year         = {2021},
  url          = {https://doi.org/10.1145/3459637.3482306},
  doi          = {10.1145/3459637.3482306},
  timestamp    = {Sun, 19 Jan 2025 13:12:17 +0100},
  biburl       = {https://dblp.org/rec/conf/cikm/DaiW21.bib},
  bibsource    = {dblp computer science bibliography, https://dblp.org}
}

@article{DBLP:journals/corr/abs-2508-11513,
  author       = {Fanzhen Liu and
                  Xiaoxiao Ma and
                  Jian Yang and
                  Alsharif Abuadbba and
                  Kristen Moore and
                  Surya Nepal and
                  C{\'{e}}cile Paris and
                  Quan Z. Sheng and
                  Jia Wu},
  title        = {Towards Faithful Class-level Self-explainability in Graph Neural Networks
                  by Subgraph Dependencies},
  journal      = {CoRR},
  volume       = {abs/2508.11513},
  year         = {2025},
  url          = {https://doi.org/10.48550/arXiv.2508.11513},
  doi          = {10.48550/ARXIV.2508.11513},
  eprinttype    = {arXiv},
  eprint       = {2508.11513},
  timestamp    = {Mon, 15 Sep 2025 20:50:21 +0200},
  biburl       = {https://dblp.org/rec/journals/corr/abs-2508-11513.bib},
  bibsource    = {dblp computer science bibliography, https://dblp.org}
}

@article{ragno2022prototype,
  title={Prototype-based interpretable graph neural networks},
  author={Ragno, Alessio and La Rosa, Biagio and Capobianco, Roberto},
  journal={IEEE Transactions on Artificial Intelligence},
  volume={5},
  number={4},
  pages={1486--1495},
  year={2022},
  publisher={IEEE}
}

@article{DAGExplainer,
  author       = {Ge Lv and
                  Lei Chen},
  title        = {On Data-Aware Global Explainability of Graph Neural Networks},
  journal      = {Proc. {VLDB} Endow.},
  volume       = {16},
  number       = {11},
  pages        = {3447--3460},
  year         = {2023},
  url          = {https://www.vldb.org/pvldb/vol16/p3447-lv.pdf},
  doi          = {10.14778/3611479.3611538},
  timestamp    = {Sun, 12 Nov 2023 02:17:28 +0100},
  biburl       = {https://dblp.org/rec/journals/pvldb/Lv023.bib},
  bibsource    = {dblp computer science bibliography, https://dblp.org}
}
\bibliographystyle{iclr2026_conference}

\newpage
\appendix

\section{Dataset Details}
\label{sec:dataset}

We evaluate our approach on a diverse set of graph classification benchmarks commonly used in GNN explanation research. Table~\ref{tab:dataset} summarizes the statistics of these datasets.

\begin{itemize}
\item \textbf{Molecular Graphs:} Mutagenicity~\citep{mutag}, NCI1~\citep{NCI1}, and BBBP~\citep{wu2018moleculenetbenchmarkmolecularmachine} are molecular datasets where nodes represent atoms and edges represent chemical bonds. In NCI1, each graph is labeled according to its anticancer activity. Mutagenicity contains compounds labeled based on their mutagenic effect on the Gram-negative bacterium (Label 0 indicates mutagenic). BBBP labels molecules by their ability to penetrate the blood-brain barrier.

\item \textbf{Synthetic Graphs:} BAMultiShapes (BAShapes) consists of 1,000 Barabási-Albert (BA) graphs with randomly placed network motifs such as house, grid, and wheel structures ~\citep{ying2019}. Class 0 contains plain BA graphs or those with one or more motifs, while Class 1 contains graphs enriched with two motif combinations.

\item \textbf{Social Graphs:} IMDB-BINARY (IMDB) represents social networks where each graph corresponds to a movie; nodes are actors and edges indicate co-appearances in scenes ~\citep{TUDataset}.
\end{itemize}

\begin{table}[h]
\centering
\caption{Statistics of standard graph datasets.}
\label{tab:dataset}
\begin{tabular}{lrrrrr}
\toprule
& BAMultiShapes & Mutagenicity & BBBP & NCI1 & IMDB \\
\midrule
\#Graphs & 1,000 & 4,337 & 2,050 & 4,110 & 1,000 \\
Avg.\ $|\mathcal{V}|$ & 40 & 30.32 & 23.9 & 29.87 & 19.8 \\
Avg.\ $|\mathcal{E}|$ & 87.00 & 30.77 & 51.6 & 32.30 & 193.1 \\
\#Node features & 10 & 14 & 9 & 37 & 0 \\
\bottomrule
\end{tabular}
\end{table}

To assess scalability, we also benchmark our approach and baselines on large-scale, real-world datasets from \citet{karateclub}: Reddit Threads, Twitch Egos, and GitHub Stargazers. Table~\ref{tab:large_datasets_stats} summarizes their statistics.

\begin{itemize}
    \item \textbf{Reddit Threads} (Reddit): Labeled as discussion-based or non-discussion-based. The task is to predict whether a thread is discussion-oriented.
    \item \textbf{Twitch Egos} (Twitch): Ego networks of Twitch users. The task is to predict whether the central gamer plays a single game or multiple games.
    \item \textbf{GitHub Stargazers} (GitHub): Social networks of developers who starred popular machine learning or web development repositories. The task is to classify whether a social network belongs to a web or machine learning repository.
\end{itemize}


\begin{table}[h]
\centering
\caption{Statistics of Graph Datasets: Nodes, Density, and Diameter}\
\label{tab:large_datasets_stats}
\begin{tabular}{lrrrrrrrr}
\toprule
Dataset & Graphs & \multicolumn{2}{c}{Nodes} & \multicolumn{2}{c}{Density} & \multicolumn{2}{c}{Diameter} \\
\cline{3-4} \cline{5-6} \cline{7-8}
                 &             & Min & Max & Min & Max & Min & Max \\
\midrule
Reddit      & 203,088 & 11  & 197  & 0.021 & 0.382 & 2 & 27 \\
Twitch      & 127,094 & 14  & 452  & 0.038 & 0.967 & 1 & 12  \\
GitHub      & 12,725  & 10  & 957 & 0.003 & 0.561 & 2 & 18 \\
\bottomrule
\end{tabular}
\end{table}



\section{Experimental setup and replicate baselines}
\label{sec:setup}



\subsection{Experimental setup}

All experiments are conducted on an Ubuntu 22.04 LTS machine with 128 GB RAM and an AMD EPYC\texttrademark~7532 processor.  
Each dataset is split into training and testing sets using an 80/20 ratio, and all experiments are repeated with three different random seeds to ensure robustness. The seeds affect the entire pipeline, including GNN training. Although our method itself is deterministic, we evaluate it on GNNs trained with different seeds to ensure fair comparison and to assess robustness under natural variations, following standard XAI practice.

The default GNN architecture is GCN~\citep{gcn} for all benchmarks. To demonstrate the model-agnostic nature of \textsc{LogicXGNN}, we additionally benchmark against multiple GNN architectures, including 2-layer GraphSAGE~\citep{hamilton2018}, 3-layer GIN~\citep{GIN_HOW}, and 2-layer GAT~\citep{GAT}. Results are reported in Table~\ref{tab:general_fid}. For GNN training, we use the Adam optimizer with a learning rate of 0.005. Each GNN is trained for up to 500 epochs with early stopping after a 100-epoch warm-up if validation accuracy does not improve for 50 consecutive epochs. \emph{All explanation methods are trained and learned on the same training data as the base GNNs, and their performance is evaluated on the test splits. The main metric is data-grounded fidelity $Fid_{\mathcal{D}}$.}

As for our approach, \textsc{LogicXGNN}, we employ the CART algorithm for all decision trees used in \textsc{LogicXGNN}. To compute the predicates, we first select the top K most informative dimensions from the decision tree that achieves at least 95\% accuracy (Eq.~\ref{eq:find_k}). We then use these dimensions to generate activation embeddings for each predicate and apply Weisfeiler-Lehman (WL) hashing to capture their topological structure.  We report the best results across different tree depths for the final rule-based explanations. For each explanation method, we allocate a time limit of 12 hours, excluding the training time of the GNNs.





\subsection{Inference with rule-based explanations on graph space}
\label{sec:inference}
Rule-based GNN explanations operate by evaluating a set of logical formulas defined over an input graph. During inference, these formulas are applied to an input graph $G$ to generate a prediction. Typically, each symbol---i.e., a predicate or concept---in a formula evaluates to \texttt{True} if its corresponding subgraph is present in $G$ (i.e., an isomorphic subgraph is detected).

To provide a concrete example, to make a final prediction for an input graph $G$, we define $R_c$ as the logical rule for a given class $c$. The graph $G$ is classified as class $c$ if the rule $R_c$ evaluates to \texttt{True} based on the values of the predicates (symbols) appearing in that formula. For instance, consider a GNN trained to predict mutagenicity, with a rule for the class ``Mutagenic'' ($R_0$). The explanation method might produce the following rule based on chemical substructure predicates:
\begin{itemize}
    \item \textbf{Predicate $p_1$}: \texttt{True} if the input graph contains a nitro group ($NO_2$).
    \item \textbf{Predicate $p_2$}: \texttt{True} if the input graph contains a 6-carbon ring.
    \item \textbf{Rule $R_0$ (Mutagenic)}: $p_1 \land p_2$
\end{itemize}

Now, consider an input molecule $G$ that contains a 6-carbon ring but no nitro group. Predicate $p_1(G)$ evaluates to \texttt{False}, while predicate $p_2(G)$ evaluates to \texttt{True}. The rule $R_0(G) = p_1(G) \land p_2(G)$ therefore becomes $\texttt{False} \land \texttt{True}$, which evaluates to \texttt{False}. Consequently, the input graph $G$ is \textbf{not} classified as ``Mutagenic'' according to this rule, as illustrated in Figure \ref{fig:infernce_example}.

We use an efficient subgraph isomorphism matching algorithm (\texttt{igraph.subisomorphic\_vf2}) to perform subgraph matching and evaluate the rules for the baseline approaches. A timeout of 10 minutes is applied to each instance (only a few cases in our experiments reached this limit). Final results are computed over all instances, excluding those that timed out.

\begin{figure}
    \centering
    \includegraphics[width=0.8\linewidth]{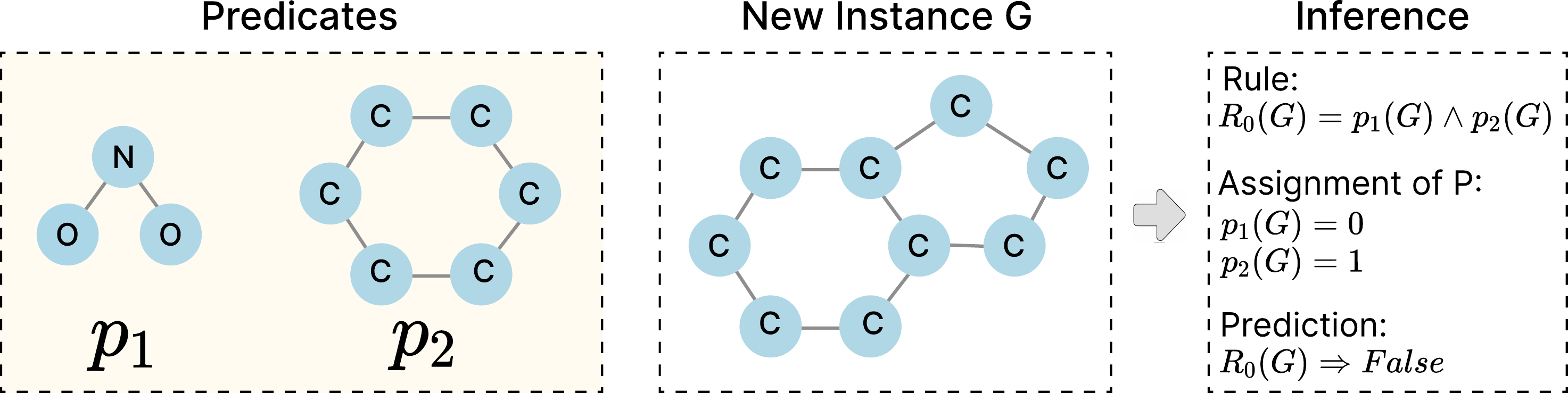}
    \caption{The example for inference.}
    \label{fig:infernce_example}
\end{figure}
Note that in our approach, \textsc{LogicXGNN}, a predicate evaluates to \texttt{True} under either of two conditions: (1) when a matching subgraph is found in $G$, or (2) when a subgraph in $G$ satisfies the corresponding grounding rule over its features $Z$, given that it matches the structural pattern. We adopt condition (2) as it provides a more comprehensive evaluation framework, incorporating both structural and feature-based constraints and going beyond simple pattern matching to a rule-driven assessment. This design allows \textsc{LogicXGNN} to identify functionally equivalent subgraphs by leveraging the GNN's learned representations, offering more accurate explanations and better generalization than baselines, which rely solely on purely structural matching.


\paragraph{Computing Data-Grounded Fidelity $Fid_{\mathcal{D}}$.}  
Following the rule-inference paradigm introduced earlier, the ground truth for computing data-grounded fidelity ($Fid_{\mathcal{D}}$) is taken to be the GNN’s own predictions. Consider a binary classification setting where the GNN label is either $0$ or $1$. The rule-based explainer produces a tuple of outputs, for example $(\text{rule}_0(G), \text{rule}_1(G))$, indicating whether the corresponding class rule evaluates to \texttt{True}. For example, in the binary classification setting, the possible outputs are:
\[
(0,0), \quad (0,1), \quad (1,0), \quad (1,1).
\]
When computing $Fid_{\mathcal{D}}$, the cases $(0,0)$ and $(1,1)$ are treated as \texttt{False}, since they indicate either multiple or no classes are satisfied, making the prediction ambiguous. A prediction is counted as correct only when there is an exact match between the GNN label and the rule-based prediction; for example, if the GNN label is $0$, then the rule output must be $(1,0)$ for it to be considered correct. The same computation scheme is also applied to other metrics used in this paper, such as accuracy, recall, precision, and F1 score. Finally, to address class imbalance, we incorporate a weighting scheme so that underrepresented classes are not penalized disproportionately in the fidelity computation.

Formally, let $y_{\text{GNN}}(G)$ denote the GNN prediction for graph $G$, and let $\hat{y}_{\text{rule}}(G)$ denote the prediction of the rule-based explainer (defined only when exactly one class rule fires). With class weights $w_c > 0$, the data-grounded fidelity is defined as:
\begin{equation}
\label{eq:fid}
Fid_{\mathcal{D}}
= \frac{\sum_{G \in \mathcal{D}_{\text{test}}} 
w_{\,y_{\text{GNN}}(G)} \cdot \mathbb{I}\!\left(\hat{y}_{\text{rule}}(G) = y_{\text{GNN}}(G)\right)}
{\sum_{G \in \mathcal{D}_{\text{test}}} w_{\,y_{\text{GNN}}(G)}} \, ,
\end{equation}
where $\mathbb{I}(\cdot)$ is the indicator function. This weighted formulation ensures that classes with fewer samples contribute proportionally, making $Fid_{\mathcal{D}}$ a fair measure of agreement between the GNN and the rule-based explainer.






\subsection{Reproduction of baseline approaches}
\label{sec:baseline_repo}

\paragraph{Reproduction of \textsc{GLGExplainer}.}  
We conducted experiments on \textsc{GLGExplainer} \citep{GLGExplainer} using their \href{https://github.com/steveazzolin/gnn_logic_global_expl?tab=readme-ov-file}{official GitHub repository}. Following the original paper, we adopted the default hyperparameter settings and consulted with the original authors to verify our understanding and methodology, ensuring a fair and faithful evaluation.  

During our study on this baseline, we made several important observations:
\begin{enumerate}
    \item \textbf{Incomplete Implementation:} The public codebase for \textsc{GLGExplainer} relies on pre-computed local explanations from \textsc{PGExplainer} \citep{PGExplainer} but omits the code for generating them. This prevents the method from being applied to new datasets out of the box. Following the authors' guidance, we integrated the official \href{https://github.com/steveazzolin/gnn_logic_global_expl?tab=readme-ov-file}{\textsc{PGExplainer} repository} and performed the necessary hyperparameter tuning to generate these essential local explanations running properly for our experiments.

    \item \textbf{Instability:} We observed that the explanation quality of both \textsc{PGExplainer} and \textsc{GLGExplainer} is highly sensitive to hyperparameter choices and random seeds. The original authors confirmed they also faced challenges in consistently reproducing results, attributing it to ``high stochasticity'' in the training process. This inherent instability means that explanations can differ substantially across runs, affecting direct reproducibility—a limitation they themselves highlight in the paper.

    \item \textbf{Reliance on Domain Knowledge:} The method requires external domain knowledge to derive concept representations from local explanations. To replicate the original authors' setup, we used concepts learned from our own approach to supply this necessary domain knowledge to \textsc{GLGExplainer}.

\end{enumerate}

In summary, our reproduced results are largely consistent with those reported in the original paper, as shown in Table~\ref{tab:fid_comparison}. Specifically, the reproduced fidelity in the intermediate abstract concept space aligns closely with the results reported in the original paper \citep{GLGExplainer} and in subsequent work \citep{GraphTrail}. Moreover, our experiments on \textsc{GraphTrail} with new datasets were carefully conducted under the guidance of the original authors. Taken together, we are confident that our experimental setup is fair and that our results on \textsc{GraphTrail} constitute a valid comparison.

\paragraph{Reproduction of \textsc{GraphTrail}.}
We conducted experiments on \textsc{GraphTrail} \citep{GraphTrail} using their \href{https://github.com/idea-iitd/GraphTrail/}{official
 GitHub repository}. Following the original paper, we adopted the default hyperparameter settings and consulted with the original authors to verify our understanding and methodology, ensuring a fair and faithful evaluation.

During our study on this baseline, we made several important observations:

\begin{enumerate}

    \item \textbf{Instability:} We observed that the explanations vary significantly across different seeds. The authors also acknowledged this issue. They confirmed that final subgraph explanations may indeed vary from seed to seed, although the fidelity values remain stable. This highlights that GraphTrail’s symbolic rules are not deterministic and depend on stochastic elements of the pipeline.

    \item \textbf{Chemically Invalid Motifs:} We observed many invalid subgraph explanations for molecular datasets. The authors admitted that invalid-looking subgraphs (e.g., a hydrogen atom appearing in a ring, which is chemically impossible) can occur. In the paper, they stated that subgraphs were manually redrawn to avoid such errors. This admission suggests that the published qualitative results required manual intervention and that the current pipeline cannot guarantee chemically valid visualizations.

    \item \textbf{Fidelity Concerns:} The authors claimed that issues such as invalid subgraphs do not affect fidelity, since fidelity is based on c-trees. However, this also means that fidelity does not fully capture the validity or interpretability of the symbolic rules. In practice, fidelity values may be correct while the extracted rules remain trivial or domain-invalid.

\end{enumerate}

In summary, our reproduced results are largely consistent with those reported in the original paper, as shown in Table~\ref{tab:fid_comparison}. In particular, the reproduced fidelity in the intermediate abstract concept space aligns closely with the original findings \citep{GraphTrail}. Taken together with our direct communication with the original authors, we are confident that our experimental setup is fair and that our results on \textsc{GraphTrail} provide a valid comparison.

Moreover, the issue of \emph{chemically invalid motifs}, which the original authors themselves acknowledged, provides further motivation for the development of our proposed \textsc{LogicXGNN}. By explicitly addressing the unreliable grounding of baseline methods, \textsc{LogicXGNN} ensures that the learned explanations are not only faithful but also scientifically valid and interpretable.

\subsection{The ineffective grounding issue of baseline approaches}
\label{sec:baseline_grounding}

\paragraph{The grounding issue of \textsc{GLGExplainer}.}

While the subgraphs produced by \textsc{GLGExplainer} are structurally valid, their grounding often results in explanations that are \emph{unrepresentative or trivial}. The primary issue lies in its two-stage, post-hoc process, which first clusters a large set of local explanations into abstract concepts and then selects a representative subgraph for each.

As the code in Figure~\ref{fig:Glg_github} illustrates, the method simply visualizes the top examples that best match a learned prototype after the concepts have already been formed. The critical weakness is that if the initial clustering groups dissimilar or noisy local explanations together, the resulting "concept" becomes incoherent. Consequently, the final representative subgraph, though a valid member of the cluster, may only be a trivial or poorly representative example, leading to an unfaithful explanation of the model's behavior.

\begin{figure}[h]
    \centering
    \includegraphics[width=0.75\linewidth]{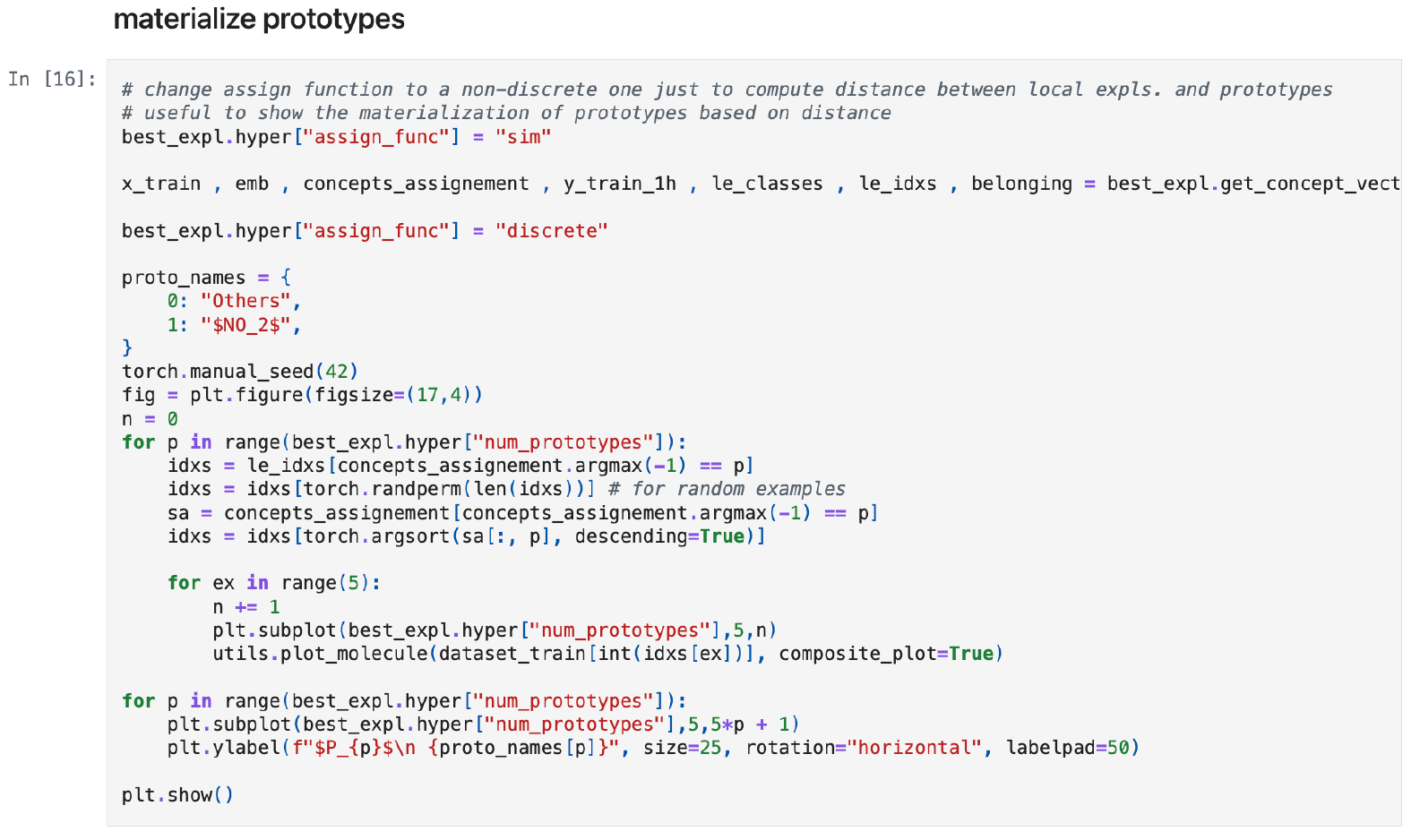}
    \caption{Code from \textsc{GLGExplainer} for selecting representative subgraphs. This post-hoc selection can yield unrepresentative examples if the underlying concept cluster is poorly defined or contains noisy local explanations.}
    \label{fig:Glg_github}
\end{figure}

\paragraph{The grounding issue of \textsc{GraphTrail}.}

During our analysis, we identified a critical issue with how \textsc{GraphTrail} grounds its explanations by generating subgraphs from computation trees, a problem the original authors acknowledge as a bug. Upon inspection, we found that the method \emph{reconstructs invalid subgraphs by mismatching node attributes with the underlying graph structure.}

This error originates in the \texttt{utils.dfs} function, shown in Figure~\ref{fig:Gtrail_github}. The function attempts to merge two different graph representations: \texttt{ctree} (containing rich attributes like atom types) and \texttt{ctree\_id} (using simple integer IDs). It operates on the flawed assumption that the nodes in both graphs are identically ordered, mapping them by their list position rather than a stable identifier. Crucially, the original author's comment in the code, \texttt{! Incorrect}, explicitly acknowledges this flawed premise. Consequently, the subgraphs produced by this function are often structurally invalid, undermining their reliability as faithful explanations.

\begin{figure}[h]
    \centering
    \includegraphics[width=0.75\linewidth]{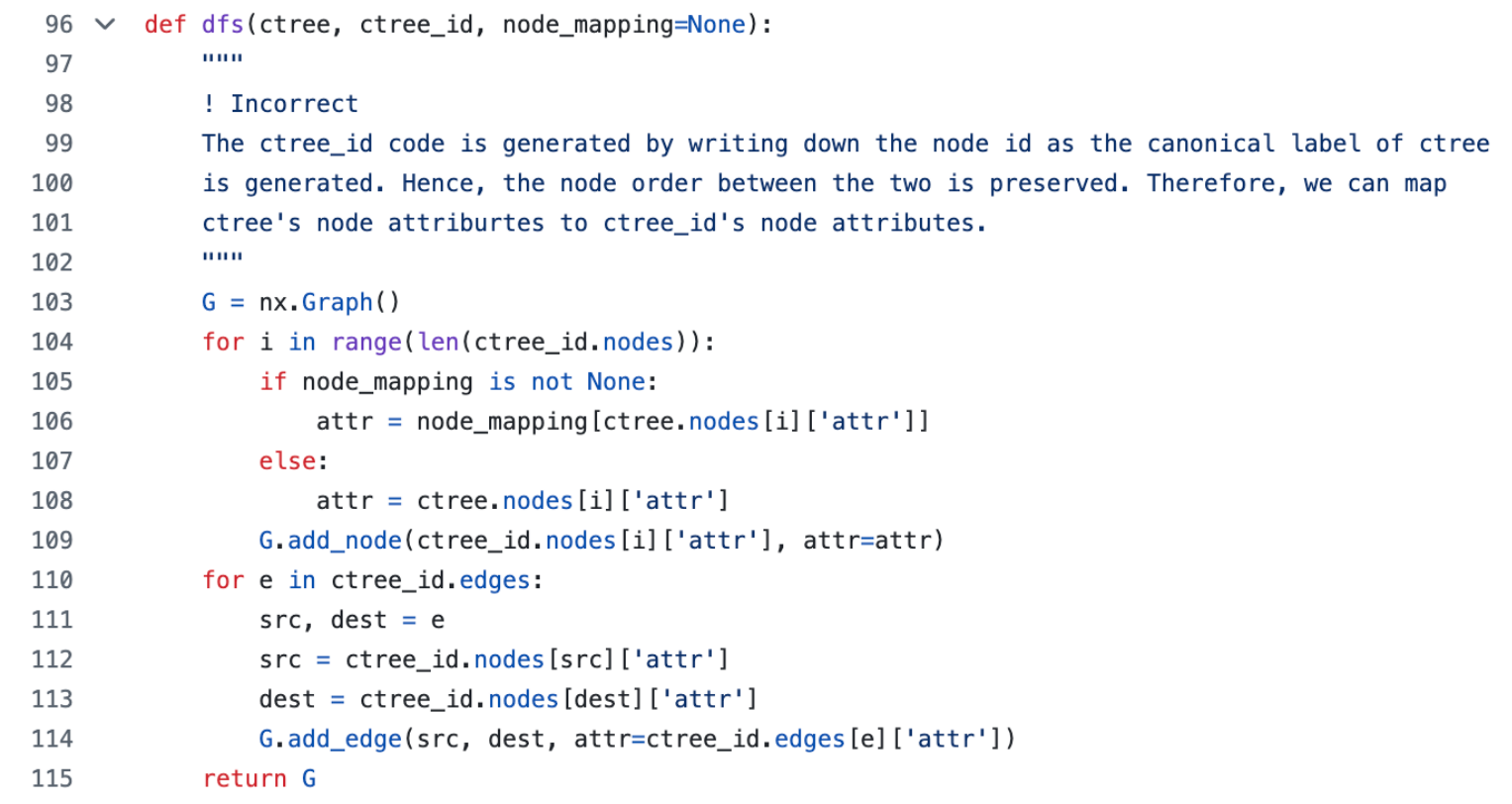}
    \caption{The flawed \texttt{utils.dfs} function from the official \textsc{GraphTrail} repository. The original author’s comment (\texttt{! Incorrect}) confirms the function's incorrect assumption about node ordering when reconstructing subgraphs.}
    \label{fig:Gtrail_github}
\end{figure}

\paragraph{Why our approach \( \phi_M \) achieves effective grounding?}

The effectiveness of our grounding process for \( \phi_M \) stems from its rigorous, data-driven foundation. Our method begins with a systematic \emph{cataloging of all structural patterns} as they appear in the training data. This detailed "bookkeeping" ensures that every subgraph used for grounding is guaranteed to be both \emph{structurally valid and highly representative}, as it is drawn directly from real instances.

Furthermore, \( \phi_M \) moves beyond simply showing these examples. It learns formal \emph{grounding rules} on top of this empirical collection, providing a precise, logical explanation for \emph{why} a given structural pattern is important for the model's prediction. This combination of data-backed, representative subgraphs and the formal rules that govern them provides a grounding that is both faithful to the data and deeply interpretable.

\subsection{Additional runtime complexity analysis}
\label{sec:add_runtime_analysis}

The complexity analysis in the main paper simplifies the overhead of training decision trees due to space constraints. We clarify here the computational cost associated with training the grounding decision trees. For each predicate $p$, we construct a dataset of representative subgraphs and train a shallow decision tree on low-dimensional structural features. Standard CART-style training has a complexity of $O(d_p N_p \log N_p)$. In our implementation, the feature dimension $d_p$ is a small constant representing simple predicate-level statistics. Furthermore, the number of training examples $N_p \le |\mathcal{V}|$ because each example corresponds to a grounded subgraph, and the total number of predicates is bounded by $O(|\mathcal{V}|)$.

Summing the cost across all predicates yields the total complexity:
\begin{equation}
\sum_{p} O(d_p N_p \log N_p) \le \sum_{O(|\mathcal{V}|)} O(1 \cdot |\mathcal{V}| \log |\mathcal{V}|) = O(|\mathcal{V}|^2 \log |\mathcal{V}|).
\end{equation}

This result is consistent with the $O(|\mathcal{V}|^2)$ grounding term reported in the main paper. The additional $\log |\mathcal{V}|$ factor is mild in practice; since the trees are intentionally kept shallow and $d_p$ is constant, this overhead remains negligible compared to the costs of subgraph extraction and predicate construction.

\section{Additional Evaluation Results}
\label{sec:add_exp}

\begin{table}[!h]
\centering
\caption{Running record on three large datasets using 4 CPU cores.}
\label{tab:large_record}
\begin{tabular}{lrrrrr}
\toprule
Dataset & Seed & $\textit{Fid}_{\mathcal{D}}$ (\%) & Mem. (GB) & Time & CPU usage \\
\midrule
Reddit Threads    & 0 & 86.82 & 36.1 & 1\,h\,06\,min & 102 \\
Reddit Threads    & 1 & 87.35 & 36.7 & 1\,h\,06\,min & 103 \\
Reddit Threads    & 2 & 87.99 & 36.7 & 1\,h\,10\,min & 98  \\
Twitch-Egos       & 0 & 57.49 & 69.2 & 2\,h\,02\,min & 98  \\
Twitch-Egos       & 1 & 57.81 & 69.5 & 2\,h\,13\,min & 92  \\
Twitch-Egos       & 2 & 55.01 & 69.2 & 2\,h\,18\,min & 95  \\
Github Stargazers & 0 & 65.79 & 64.9 & 3\,h\,35\,min & 80  \\
Github Stargazers & 1 & 59.49 & 84.6 & 3\,h\,43\,min & 95  \\
Github Stargazers & 2 & 66.85 & 79.9 & 3\,h\,18\,min & 97  \\
\bottomrule
\end{tabular}%
\vspace{-5pt}
\end{table}

\begin{table}[!t]
\centering

\caption{Data-grounded fidelity $\textit{Fid}_{\mathcal{D}}$ (\%) in percentage and original fidelity $\textit{Fid}$ of all baselines, averaged over three random seeds. For each dataset, the highest fidelity is highlighted in bold. Since our approach reports only $\textit{Fid}_{\mathcal{D}}$, its $\textit{Fid}$ entries are omitted and marked with ``--''. ``---'' indicates cases where no rules were learned.}
\vspace{-5pt}
\label{tab:fid_comparison}
\resizebox{\textwidth}{!}{%
\begin{tabular}{@{}lrrrrr rrrrr@{}}
\toprule
 & \multicolumn{2}{c}{BAShapes} & \multicolumn{2}{c}{BBBP} & \multicolumn{2}{c}{Mutagenicity} & \multicolumn{2}{c}{NCI1} & \multicolumn{2}{c}{IMDB} \\
\cmidrule(lr){2-3} \cmidrule(lr){4-5} \cmidrule(lr){6-7} \cmidrule(lr){8-9} \cmidrule(lr){10-11}
Method & $\textit{Fid}$ & $\textit{Fid}_{\mathcal{D}}$ & $\textit{Fid}$ & $\textit{Fid}_{\mathcal{D}}$ & $\textit{Fid}$ & $\textit{Fid}_{\mathcal{D}}$ & $\textit{Fid}$ & $\textit{Fid}_{\mathcal{D}}$ & $\textit{Fid}$ & $\textit{Fid}_{\mathcal{D}}$ \\
\midrule
\textsc{GLG} & 57.50 {\scriptsize $\pm$ 0.50}& 31.09 {\scriptsize$\pm$ 5.81}& 52.50 {\scriptsize $\pm$ 0.50}& ---& 62.16 {\scriptsize $\pm$ 2.19} & 38.98 {\scriptsize$\pm$ 3.01}& 58.09 {\scriptsize $\pm$ 2.51} & 9.61 {\scriptsize $\pm$ 7.76}& 53.50 {\scriptsize $\pm$ 0.50}& 0.00 {\scriptsize $\pm$ 0.00}\\
\textsc{G-Trail} & 84.67 {\scriptsize $\pm$ 4.77} & 79.82 {\scriptsize $\pm$ 3.65}& 97.17 {\scriptsize $\pm$ 0.89} & 50.00 {\scriptsize $\pm$ 0.00} & 73.90 {\scriptsize $\pm$ 1.49} & 65.93 {\scriptsize $\pm$ 3.83}& 68.70 {\scriptsize $\pm$ 0.93} & 60.04 {\scriptsize $\pm$ 4.91}& 66.67 {\scriptsize $\pm$ 10.93} & 35.35{\scriptsize $\pm$ 2.85}\\
$\phi_M$ (Ours) &  -- & \textbf{82.67 {\scriptsize $\pm$ 0.58}} & -- & \textbf{85.32 {\scriptsize $\pm$ 2.96}}& --  & \textbf{81.36 {\scriptsize $\pm$ 2.12}}& -- & \textbf{73.81 {\scriptsize $\pm$ 2.26}}&  --  & \textbf{74.17{\scriptsize $\pm$ 6.75}}\\
\bottomrule
\end{tabular}%
}
\vspace{-5pt}
\end{table}

\subsection{Fidelity comparison}
\label{sec:fid_comparison_exp}

We compare the original fidelity—computed in the intermediate, uninterpretable concept space and reported by the baselines—against their data-grounded fidelity ($\textit{Fid}_{\mathcal{D}}$), computed in the final grounded form, in Table~\ref{tab:fid_comparison}. Under this more rigorous metric, the performance of existing methods drops significantly, underscoring the need for explanations that are both faithful and genuinely interpretable.

\subsection{Running record on three large-scale benchmarks}
\label{sec:fid_comparison_exp}

We report the running record of our approach, \textsc{LogicXGNN}, on three large-scale benchmarks in Table~\ref{tab:large_record}. Our method is the only existing approach that can scale to this size with high efficiency, as indicated by its low and stable memory usage.

\subsection{Additional metrics on experiments over five common datasets}
\label{sec:fid_add_exp}

To ensure robust and reliable comparisons, in addition to Table~\ref{tab:fid_res}, we also report the weighted test accuracy, precision, recall, and F1-score, averaged across three runs, in Tables~\ref{tab:acc}, \ref{tab:precision}, \ref{tab:recall}, and \ref{tab:f1}, respectively. The highest scores are highlighted in bold. Note that precision, recall, and F1-score are computed against the GNN predictions, following the evaluation protocol used in GraphTrail \cite{GraphTrail}.

\begin{table}[!h]
\centering
\caption{Test accuracy of various explanation methods (\%) on graph classification datasets.}
\label{tab:acc}
\begin{tabular}{lrrrrr}
\toprule
Method & BAShapes & BBBP & Mutagenicity & NCI1 & IMDB \\
\midrule
\textsc{GLG} &41.14   $\pm $ 4.81     &  --- &38.37    $\pm$  0.46    &  3.73  $\pm$ 8.57&0.00   $\pm$ 0.00\\

\textsc{G-Trail} & 78.88 $\pm$ 5.14& 55.10 $\pm$ 3.82& 59.72 $\pm$  4.37   &56.69   $\pm$ 1.46  & 30.20 $\pm$ 4.23\\
$\phi_M$ (Ours)  & \textbf{82.67  $\pm$ 0.57}& \textbf{78.06  $\pm$ 4.28 }& \textbf{68.69  $\pm$ 2.01 }& \textbf{63.63 $\pm$ 2.93}& \textbf{74.16 $\pm$ 6.75}\\
\bottomrule
\end{tabular}
\end{table}


\begin{table}[!h]
\centering
\caption{Weighted precision of various explanation methods (\%) on graph classification datasets.}
\label{tab:precision}
\begin{tabular}{lrrrrr}
\toprule
Method & BAShapes & BBBP & Mutagenicity & NCI1 & IMDB \\
\midrule
\textsc{GLG} &  28.85 $\pm$ 5.49&---&  70.63 $\pm$ 6.59&  67.64 $\pm$ 4.45&0.00  $\pm$ 0.00\\
\textsc{G-Trail} & 80.05 $\pm$ 6.48& 59.12 $\pm$ 2.81& 64.45 $\pm$ 1.14& 61.18 $\pm$ 2.51& 39.60 $\pm$ 9.66\\
$\phi_M$ (Ours)  & \textbf{82.76 $\pm$ 0.61}& \textbf{92.10 $\pm$ 0.88}& \textbf{81.71 $\pm$ 1.77}& \textbf{74.19 $\pm$ 1.73}&\textbf{82.78$\pm$ 2.68}\\
\bottomrule
\end{tabular}

\end{table}

\begin{table}[!h]
\centering
\caption{Weighted recall of various explanation methods (\%) on graph classification datasets.}
\label{tab:recall}
\begin{tabular}{lrrrrr}
\toprule
Method & BAShapes & BBBP & Mutagenicity & NCI1 & IMDB \\
\midrule
\textsc{GLG} &  31.09 $\pm$ 5.81 &---&38.98  $\pm$ 3.01 &  9.61 $\pm$ 7.76 &0.00  $\pm$ 0.00\\
\textsc{G-Trail} & 79.82 $\pm$ 3.65 & 50.00 $\pm$ 0.00& 65.93 $\pm$ 3.83& 60.04 $\pm$ 4.91& 35.35 $\pm$ 2.85\\
$\phi_M$ (Ours)  & \textbf{82.67 $\pm$ 0.57} & \textbf{85.32 $\pm$ 2.96} & \textbf{81.36 $\pm$ 2.12}& \textbf{73.81 $\pm$ 2.26}& \textbf{74.16 $\pm$ 6.75}\\
\bottomrule
\end{tabular}
\end{table}

\begin{table}[!h]
\centering
\caption{Weighted F1-score of various explanation methods (\%) on graph classification datasets.}
\label{tab:f1}
\begin{tabular}{lrrrrr}
\toprule
Method & BAShapes & BBBP & Mutagenicity & NCI1 & IMDB \\
\midrule
\textsc{GLG} &  31.77 $\pm$ 3.46&---&32.08 $\pm$ 0.46&  11.66 $\pm$ 8.29&0.00 $\pm$ 0.00\\
\textsc{G-Trail} & 78.26 $\pm$ 5.15& 66.84 $\pm$ 2.49& 57.53 $\pm$ 5.82& 53.01 $\pm$  3.92& 32.45 $\pm$ 1.94\\
$\phi_M$ (Ours)  & \textbf{82.68 $\pm$ 0.56}& \textbf{92.11 $\pm$ 0.78}& \textbf{81.44 $\pm$ 1.92}& \textbf{73.66 $\pm$ 2.02}& \textbf{71.43 $\pm$ 7.82}\\
\bottomrule
\end{tabular}
\end{table}



\begin{table}[!t]
\centering
\caption{Fidelity comparison of \textsc{LogicXGNN} against baseline methods \emph{across multiple GNN architectures}. $M$ denotes the underlying model's classification accuracy (\%); \emph{Base.} denotes the best baseline fidelity (\%) from the better of \textsc{GLGExplainer} and \textsc{GraphTrail}; and $\phi_M$ denotes the fidelity of our method, \textsc{LogicXGNN} (\%).}
\vspace{-5pt}
\resizebox{\textwidth}{!}{%
\begin{tabular}{@{}lrrr rrr rrr rrr@{}}
\toprule
 & \multicolumn{3}{c}{GCN} 
 & \multicolumn{3}{c}{GraphSAGE} 
 & \multicolumn{3}{c}{GIN} 
 & \multicolumn{3}{c}{GAT} \\
\cmidrule(lr){2-4} \cmidrule(lr){5-7} \cmidrule(lr){8-10} \cmidrule(lr){11-13}
Dataset & $M$ &\emph{Base.} & $\phi_M$ & $M$ & \emph{Base.} & $\phi_M$ & $M$ & \emph{Base.} & $\phi_M$ & $M$ & \emph{Base.} & $\phi_M$ \\
\midrule
BAShapes     & 80.50 & 72.02 & 82.67 & 47.50& 73.69 & 100.00& 80.50 & 72.81 & 83.00& 47.50& 82.98 & 100.00\\
BBBP         & 80.88 & 50.00 & 85.32 & 84.80& 50.00 & 79.16& 86.76& 50.00 & 80.08& 80.88& 50.00 & 83.75\\
Mutagenicity & 78.69 & 65.93 & 81.36 & 76.15& 61.69 & 79.23& 76.73& 61.80 & 77.96& 76.61& 61.80 & 77.27\\
IMDB         & 74.50 & 35.35 & 74.16 & 73.50& 25.67 & 76.00& 74.00 & 25.67 & 75.00& 76.00& 26.98 & 72.50\\
NCI1         & 70.56 & 60.04 & 73.81 & 70.07& 59.25 & 69.70& 70.56& 61.84 & 74.20& 70.32& 57.67 & 68.11\\
\bottomrule
\end{tabular}%
}
\label{tab:general_fid}
\vspace{-5pt}
\end{table}

\subsection{Empirical evidence for generalizability across GNN architectures}
\label{sec:gnn_generalization}

Table~\ref{tab:general_fid} reports the best baseline fidelity (Base.) and the fidelity of our approach, $\phi_M$, across multiple GNN architectures. The table also includes the classification accuracy of the underlying GNN models ($M$) for reference. Note that the GNN architectures \textbf{GraphSAGE} and \textbf{GAT} achieve only 47.50\% accuracy on the BAShapes dataset due to their limited expressive power. These results are consistent with the findings reported in \citet{GraphTrail}. $\phi_M$ consistently achieves high fidelity across all architectures and datasets, uniformly outperforming the baselines. This demonstrates both (1) the strong generalizability of $\phi_M$ across different GNNs and (2) its state-of-the-art explanatory fidelity.


\section{More Details on Grounding \( \phi_M \) into the Input Feature Space \( \mathbf{X} \)}
\label{sec:grounding_details}

\subsection{On the canonical representation of subgraph input feature $\mathbf{Z}$}
\label{sec:canonical_rep}
To create a canonical representation, we partition the node set into orbits and establish a consistent ordering using Algorithm~\ref{alg:stable_orbits}. The ordering scheme employs a hierarchical multi-criteria approach that ensures deterministic results:

\begin{enumerate}
    \item \textbf{Anchor priority:} The orbit containing the anchor node is always placed first
    \item \textbf{Size ordering:} Remaining orbits are sorted by cardinality in ascending order
    \item \textbf{Degree signature:} Orbits with identical sizes are distinguished by their sorted degree sequences
    \item \textbf{Distance profile:} Further ties are resolved using sorted distances from the anchor node
    \item \textbf{Node identifiers:} Ultimate disambiguation is achieved through sorted node identifiers
\end{enumerate}

We prove in Theorem~\ref{thm:orbit_sorting_consistency} that this multi-criteria lexicographic ordering produces a deterministic total order, ensuring the canonical representation can be reliably reproduced across identical graph structures.

\begin{algorithm2e}[!h]
    \caption{Stable Orbit Decomposition with Anchor}
    \label{alg:stable_orbits}
    \small
    \DontPrintSemicolon
    \SetKwProg{Fn}{Function}{}{end}
    \KwIn{Graph $G$, anchor node $a$}
    \KwOut{Orbit labels $L$ and sorted orbits $\mathcal{O}$}
    \SetKwFunction{SOD}{StableOrbitDecomposition}
    \Fn{\SOD{$G, a$}}{
        
        $D \gets \textit{ComputeAnchorDistances}(G, a)$ \tcc*{Calculate shortest path distances from anchor}
        
        $\Sigma \gets \textit{FindAllAutomorphisms}(G)$ \tcc*{Enumerate all graph symmetries}
        
        $\mathcal{O}_{raw} \gets \textit{ExtractNodeOrbits}(\Sigma, G)$ \tcc*{Group nodes into symmetry equivalence classes}
        
        $\mathcal{O}_{anchor} \gets \textit{IdentifyAnchorOrbit}(\mathcal{O}_{raw}, a)$ \tcc*{Locate orbit containing anchor node}
        
        $\mathcal{O} \gets \textit{StableSortOrbits}(\mathcal{O}_{raw}, \mathcal{O}_{anchor}, D, G)$ \tcc*{Sort by size, degree, distance, node IDs}   
        
        $L \gets \textit{AssignCanonicalLabels}(\mathcal{O}, a)$ \tcc*{Map nodes to orbit labels with anchor priority}
        
        \Return{$(L, \mathcal{O})$}
    }
\end{algorithm2e}

\begin{theorem}[Orbit Sorting Consistency]
\label{thm:orbit_sorting_consistency}
The multi-criteria orbit sorting scheme employed in Algorithm~\ref{alg:stable_orbits} produces a deterministic total ordering for any graph with fixed structure and anchor node.
\end{theorem}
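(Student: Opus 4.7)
The plan is to argue that the sorting rule is a lexicographic comparator built from five keys, and to show that (i) each key is well-defined as a function of an orbit (not of a choice of representative), and (ii) the final key already yields a total order on distinct orbits, so the combined comparator is automatically total and deterministic.

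First I would verify well-definedness of every key on an orbit. The anchor-priority key is a Boolean indicator "contains $a$", which depends only on the orbit as a set. The size key is the cardinality of the orbit, again a set invariant. For the degree and distance keys I would use the fact that orbits are the equivalence classes of the action of $\mathrm{Aut}(G)$ on $V_G$: if $u,v$ lie in the same orbit, then some $\pi \in \mathrm{Aut}(G)$ sends $u$ to $v$, and since automorphisms preserve adjacency they preserve degree; hence the multiset of degrees over the orbit (and its sorted tuple) is an orbit invariant. For the distance-to-anchor key one has to be slightly more careful, because $\pi$ need not fix $a$; however, the sorting scheme computes the \emph{sorted} multiset of distances $\{d_G(a,u) : u \in \mathrm{Orb}(v)\}$, which is a property of the set $\mathrm{Orb}(v)$ together with the fixed anchor $a$ and is therefore an invariant of $(\mathrm{Orb}(v),a)$. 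The final key, the sorted tuple of node identifiers in the orbit, is clearly a set invariant.

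Next I would combine the five keys lexicographically. Standard lexicographic composition of total orders on each coordinate produces a total preorder on the tuples, and hence a total order on the set of orbits unless two distinct orbits agree on all keys. Because orbits are equivalence classes of a group action, distinct orbits are disjoint as subsets of $V_G$; consequently their sorted tuples of (globally unique) node identifiers differ. So the node-identifier key alone already separates any two distinct orbits, which guarantees that the lexicographic comparator is antisymmetric and total. Determinism follows because every key is computed from deterministic primitives (shortest-path distances in a fixed graph, degrees in a fixed graph, set cardinalities, and fixed node identifiers), so running Algorithm~\ref{alg:stable_orbits} twice on the same input $(G,a)$ produces identical key tuples and hence identical sort output.

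The main obstacle I foresee is the distance-to-anchor key, since the anchor $a$ need not be fixed by every automorphism and one might initially worry that two nodes in the same orbit could have different distances to $a$. I would address this by emphasizing that the sorting key is the \emph{multiset} of distances over the orbit, not the distance of any individual representative, so the key is read off the orbit as a set together with the externally fixed anchor and is therefore unambiguous. Apart from this point the argument is bookkeeping: invariance of each key on orbits, plus the observation that disjoint sets of unique identifiers cannot share a sorted identifier tuple, is sufficient to conclude that the ordering is total and deterministic.
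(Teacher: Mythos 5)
Your proposal is correct and follows essentially the same route as the paper's proof: a lexicographic key tuple over the orbits, with the sorted node-identifier component serving as the ultimate tiebreaker because distinct orbits are disjoint and identifiers are unique, and determinism following from the deterministic computation of each key. Your additional remarks on well-definedness (degree invariance under automorphisms and the anchor-distance multiset being a property of the orbit as a set) are a minor, welcome refinement of the same argument rather than a different approach.
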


\begin{proof}
Let $\mathcal{O} = \{O_1, O_2, \ldots, O_k\}$ be the set of orbits obtained from the automorphism group decomposition. We define the sorting key for orbit $O_i$ as:
\begin{align}
Key(O_i) = (|O_i|, \mathbf{d}_i, \mathbf{dist}_i, \mathbf{ids}_i)
\end{align}
where:
\begin{itemize}
    \item $|O_i|$ is the orbit size
    \item $\mathbf{d}_i = \text{sorted}([\deg(v) : v \in O_i])$ is the sorted degree sequence
    \item $\mathbf{dist}_i = \text{sorted}([d_G(a, v) : v \in O_i])$ is the sorted distance sequence from anchor $a$
    \item $\mathbf{ids}_i = \text{sorted}(O_i)$ is the sorted node identifier sequence
\end{itemize}

We prove that this lexicographic ordering induces a strict total order on $\mathcal{O}$.

\textbf{Well-definedness:} Each component is well-defined for any finite graph:
$|O_i| \in \mathbb{N}$, $\mathbf{d}_i \in \mathbb{N}^{|O_i|}$, $\mathbf{dist}_i \in (\mathbb{N} \cup \{\infty\})^{|O_i|}$, and $\mathbf{ids}_i$ is a finite sequence of distinct node identifiers.

\textbf{Totality:} For any two distinct orbits $O_i, O_j$ with $i \neq j$, we have $O_i \cap O_j = \emptyset$ by definition of orbit decomposition. We show $Key(O_i) \neq Key(O_j)$ by case analysis:

\begin{enumerate}
    \item If $|O_i| \neq |O_j|$, then $Key(O_i) \neq Key(O_j)$ immediately.
    
    \item If $|O_i| = |O_j|$ but $\mathbf{d}_i \neq \mathbf{d}_j$, then the orbits have different degree signatures, so $Key(O_i) \neq Key(O_j)$.
    
    \item If $|O_i| = |O_j|$ and $\mathbf{d}_i = \mathbf{d}_j$ but $\mathbf{dist}_i \neq \mathbf{dist}_i$, then the orbits have different distance profiles from the anchor, so $Key(O_i) \neq Key(O_j)$.
    
    \item If $|O_i| = |O_j|$, $\mathbf{d}_i = \mathbf{d}_j$, and $\mathbf{dist}_i = \mathbf{dist}_j$ but $\mathbf{ids}_i \neq \mathbf{ids}_j$, then the orbits contain different sets of nodes (since node identifiers are unique), so $Key(O_i) \neq Key(O_j)$.
\end{enumerate}

\textbf{Tiebreaker completeness:} The final case cannot occur when $O_i = O_j$. Suppose $|O_i| = |O_j|$, $\mathbf{d}_i = \mathbf{d}_j$, $\mathbf{dist}_i = \mathbf{dist}_j$, and $\mathbf{ids}_i = \mathbf{ids}_j$. Then $\text{sorted}(O_i) = \text{sorted}(O_j)$. Since node identifiers are unique within a graph, this implies $O_i$ and $O_j$ contain exactly the same nodes, contradicting the assumption that $i \neq j$ (orbits are disjoint).

\textbf{Determinism:} Each component of $Key(O_i)$ is computed deterministically from the graph structure and anchor choice. Since lexicographic comparison admits no ties between distinct orbits and standard sorting algorithms are deterministic, the resulting orbit ordering is completely determined by the graph structure.

Therefore, the multi-criteria sorting scheme produces a unique, deterministic total ordering of orbits for any fixed graph structure and anchor node.
\end{proof}

\begin{remark}
This result ensures that the stable orbit decomposition is reproducible across multiple runs for the same graph instance. The hierarchical sorting criteria are designed to resolve ties at progressively finer granularities, with the node identifier sequence providing ultimate disambiguation. Since our method operates on graphs with consistent structural representations, the deterministic ordering property is sufficient for ensuring algorithmic reliability.
\end{remark}

\begin{figure*}[!t]
    \centering 
    \subfigure[An example of ground-rule explanations for a single predicate $p_{36}$.]{%
        \includegraphics[width=0.95\linewidth]{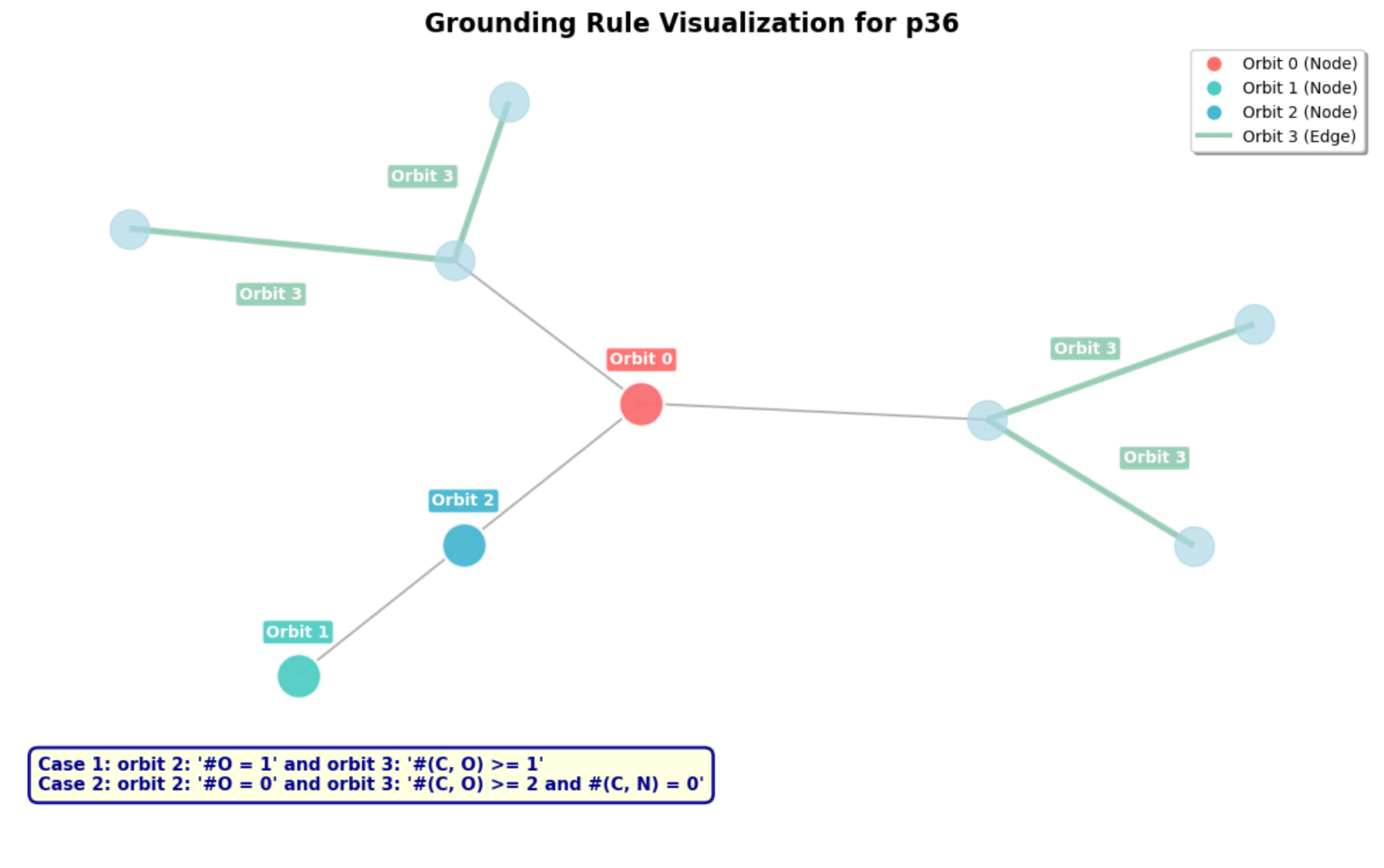}
        \label{fig:p36_rule}
    }
    \subfigure[Top 5 subgraph explanations for a single predicate $p_{36}$. Coverage below each subgraph indicates the percentage of instances that activate the predicate $p_{36}$ in which this subgraph occurs as an explanation.]{%
        \includegraphics[width=1.0\linewidth]{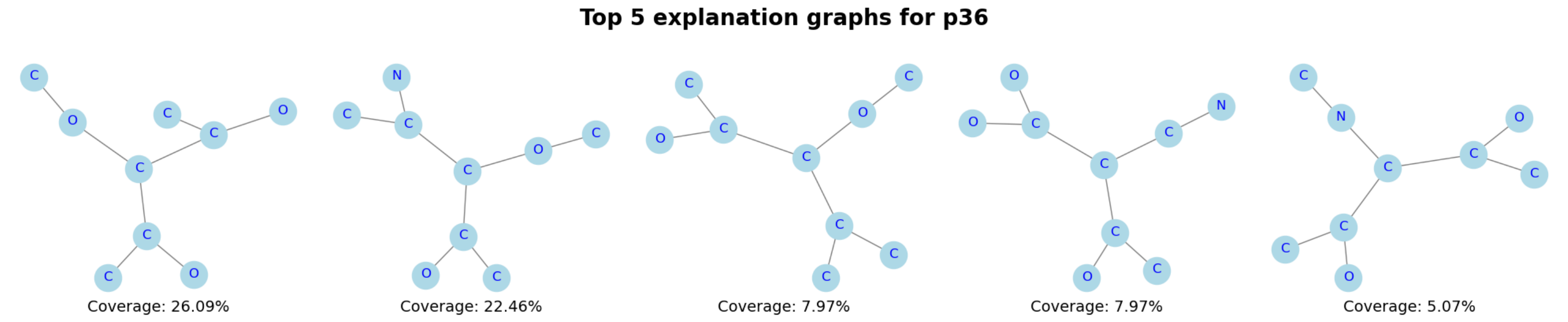}
        \label{fig:p36_graph}
    }
    \caption{An example of grounding rule explanations (top) and subgraph explanations (bottom) generated by our proposed approach $\phi_M$. }
    \label{fig:ground_example}
    \vspace{-10pt}
\end{figure*}

\subsection{Examples with guidance on reading grounding rule visualizations}
\label{sec:examples_read}

We show an example of our generated explanations for a single predicate $p_{36}$ from the real-world dataset BBBP in Figure~\ref{fig:ground_example}, with the grounding rule visualization presented in Figure~\ref{fig:p36_rule}. Note that each orbit in the visualization is labeled and colored differently. The figure illustrates three node orbits, such as Orbit~0 (the central red node) and Orbit~2 (the blue node), and an edge orbit, Orbit~3 (the four thick teal edges). This distinction improves the fine-grained expressiveness of our grounding rules, enhancing the structural specificity of the explanation.

The grounding rules are presented in Disjunctive Normal Form (DNF), where the overall explanation is a disjunction of cases (clauses) connected by an implicit ``or.'' Each case is a conjunction of conditions on different orbits. To interpret these rules:

\begin{itemize}
    \item \textbf{Case Structure}: In Figure~\ref{fig:p36_rule}, two cases are shown. Case 1 is the conjunction of the condition on Orbit~2 (\verb|'#O = 1'|) and the condition on Orbit~3 (\verb|'#(C, O) >= 1'|). Case 2 is the conjunction of the condition on Orbit~2 (\verb|'#O = 0'|) and the conditions on Orbit~3 (\verb|'#(C, O) >= 2 and #(C, N) = 0'|).

    \item \textbf{Node Orbit Interpretation}: Orbit~2 is a node orbit. The condition \verb|'#O = 1'| in Case 1 means that the node(s) in Orbit~2 must include exactly one Oxygen (O) atom; \verb|'#O = 0'| in Case 2 forbids Oxygen in that orbit.

    \item \textbf{Edge Orbit Interpretation}: Orbit~3 is an edge orbit. The condition \verb|'#(C, O) >= 2'| in Case 2 means that at least two of the edges in Orbit~3 must connect a Carbon (C) atom to an Oxygen (O) atom, while \verb|'#(C, N) = 0'| forbids Carbon--Nitrogen edges in that orbit.

    \item \textbf{Case Satisfaction}: A specific case is satisfied only if all of its conjunctive conditions are met simultaneously. The model's prediction is explained if the input graph satisfies at least one of the listed cases.
\end{itemize}

A major limitation of existing explanation methods is their \emph{ineffective grounding}, as they often associate each concept with a single, possibly cherry-picked subgraph. Our approach provides a richer, data-driven alternative. As shown in Figure~\ref{fig:p36_graph}, we display the top-5 representative subgraphs for each rule, ranked by their frequency in the dataset. Coverage quantifies the proportion of instances activating predicate $p_{36}$ in which each subgraph occurs as an explanation, providing a more comprehensive view of the concept's presence across the data.

This strong alignment between our abstract rules and concrete examples is a direct result of our methodology. We first extract these subgraph instances directly from the data and then learn the general grounding rules from this empirical collection. The resulting dual representation—a formal logical rule paired with a ranked set of visual instances—offers a more detailed and multifaceted explanation than existing methods, yielding a deeper, more robust, and ultimately more interpretable grounding of the GNN’s behavior.

\begin{figure*}[t]
    \centering
    \includegraphics[width=0.95\linewidth]{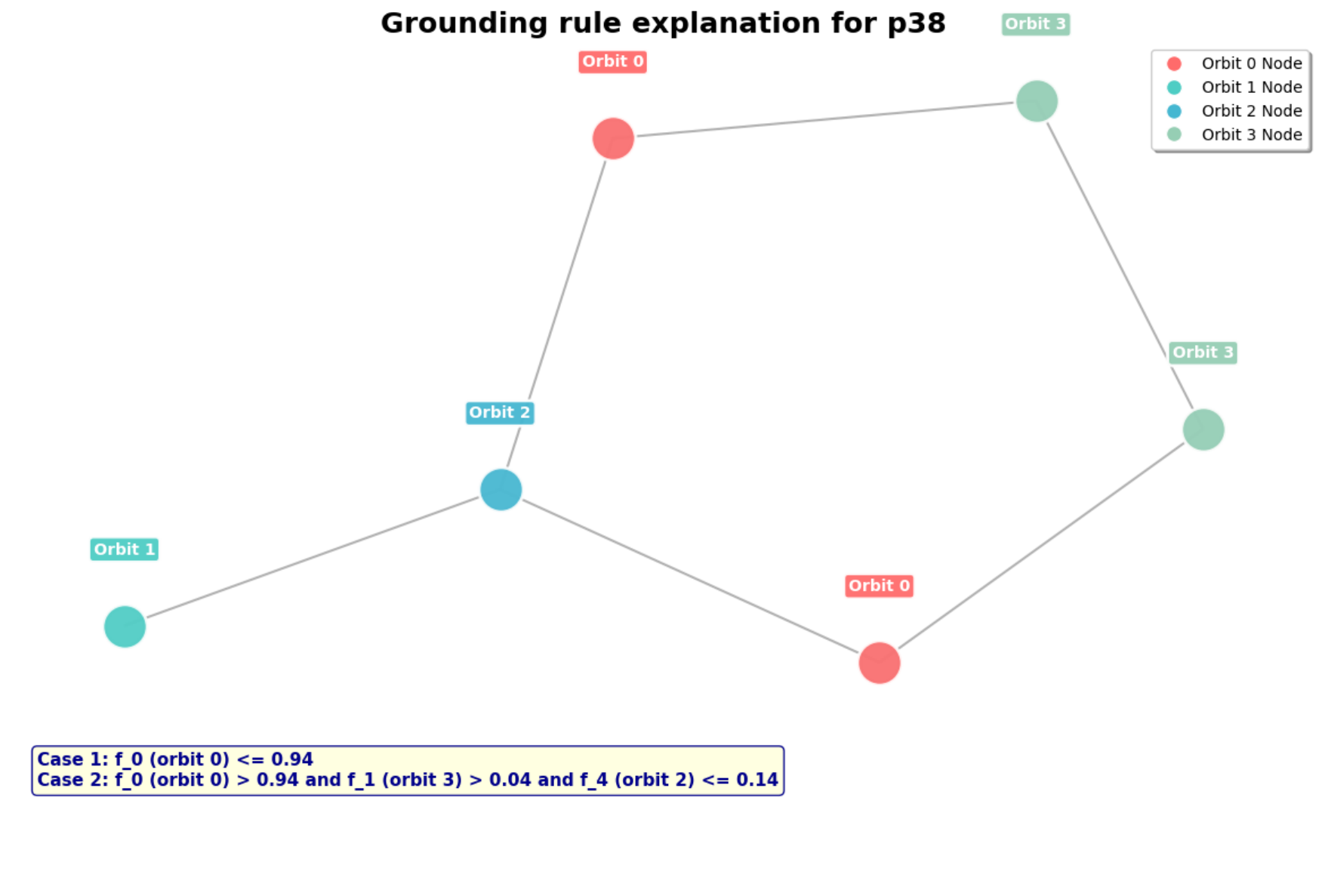}
    \vspace{-10pt}
    \caption{An example of grounding rule explanations on \emph{continues} feature space \( \mathbf{X} \) generated by our proposed approach $\phi_M$.}
    \label{fig:cts_example}
\end{figure*}

\subsection{Example: grounding (\(\phi_M\)) in a continuous feature space}
\label{sec:continuous_case}

We demonstrate our method’s ability to generate explanations in a continuous feature space using predicate $p_{38}$ as an example (Figure~\ref{fig:cts_example}). For demonstration only, $p_{38}$ is trained on a synthetic dataset constructed by replacing discrete node features with continuous random features in a subset of Mutagenicity. To the best of our knowledge, most existing explanation approaches cannot handle such cases effectively. Their reliance on discrete attributes means their subgraph explanations lose meaning when faced with continuous feature distributions. Our approach addresses this limitation by leveraging orbit-based feature aggregation combined with learnable, threshold-based rules. The grounding rules are formulated in Disjunctive Normal Form (DNF), where different cases are connected with an implicit ``or.'' To interpret these rules:

\begin{itemize}
    \item \textbf{Case Structure}: Case 2 is a conjunction of three conditions: the condition on Orbit 0 (\verb|f_0(orbit 0) > 0.94|), on Orbit 3 (\verb|f_1(orbit 3) > 0.04|), and on Orbit 2 (\verb|f_4(orbit 2) <= 0.14|).

    \item \textbf{Orbit-based Aggregation}: Each orbit aggregates continuous features from its constituent nodes using a statistical function (e.g., mean). This provides a robust summary of the features for all structurally equivalent nodes within the pattern.

    \item \textbf{Continuous Feature Interpretation}: The condition \verb|f_0(orbit 0) > 0.94| means the aggregated value of the 0-th feature dimension across the nodes in Orbit 0 must exceed 0.94. Similarly, other conditions apply learned thresholds to different feature dimensions of their respective orbits.

    \item \textbf{Adaptive Threshold Learning}: These numerical thresholds (e.g., 0.94, 0.04, 0.14) are not fixed but are learned during training to define optimal boundaries that best discriminate between prediction classes.

    \item \textbf{Case Satisfaction}: Case 2 is satisfied only when all three of its threshold conditions are met simultaneously, ensuring that both structural and feature constraints work in conjunction.
\end{itemize}

Traditional subgraph-based explanations are ill-suited for this setting. They typically produce a single graph structure with discrete labels, which fails to capture the nuances of how continuous feature distributions influence a model's prediction. In contrast, our orbit-based rules provide a more robust and expressive explanation. By aggregating features across structurally equivalent nodes and learning discriminative thresholds, our method specifies \emph{how} continuous feature values within a given topological pattern collectively drive the model's decision. This capability enables our method to deliver meaningful, interpretable explanations for complex real-world datasets where traditional approaches fail due to the prevalence of continuous attributes.


\section{Additional Details of Rule-Based Explanations}
\label{sec:add_utility_exp}

\subsection{The metrics: Coverage, Stability, and Validity}
\label{sec:utility_metrics}
To complement this qualitative analysis, we quantitatively evaluate the generated explanations using a set of objective and reproducible metrics that reflect their practical utility to end users:
\begin{itemize}

    \item \textbf{Coverage:} The proportion of target-class instances for which the rule-based prediction remains correct \emph{when restricted to only valid subgraph patterns} (i.e., after removing all invalid patterns). Formally,
    \begin{equation}
        \text{Coverage}(\phi) = \frac{|\{x \in \mathcal{D}_c : \phi(x) = \text{True}\}|}{|\mathcal{D}_c|},
    \end{equation}
    where $\phi(x)$ is True iff evaluating the rule on $x$ using only valid subgraph patterns (i.e., considering their presence or absence) predicts class $c$. Here, “valid” refers to subgraph patterns that appear as actual subgraphs of molecules in the dataset, ensuring they are derived from structurally valid molecular graphs rather than artificially constructed or chemically impossible fragments.

    \item \textbf{Stability:} The consistency of explanation subgraphs across multiple runs with different random seeds. This metric is crucial for building user trust, as inconsistent explanations undermine confidence in the model's reasoning. We measure stability as the fraction of subgraphs repeated in \emph{all runs} relative to the largest number of subgraphs in any single run:
    \begin{equation}
        \text{Stability}(\{\phi_i\}_{i=1}^k) = \frac{\big|\bigcap_{i=1}^k \phi_i\big|}{\max_{i=1,\dots,k} |\phi_i|},
    \end{equation}
    where $\phi_i$ denotes the set of subgraphs discovered in the $i$-th run and $k$ is the total number of runs. In this case of $k=3$ (3 runs), this simplifies to
    \begin{equation}
        \text{Stability} = \frac{|\phi_1 \cap \phi_2 \cap \phi_3|}{\max\{|\phi_1|,|\phi_2|,|\phi_3|\}}.
    \end{equation}

    \item \textbf{Validity:} The proportion of explanation subgraphs that correspond to valid chemical fragments or structural motifs found in the dataset. For molecular datasets, this ensures that the generated explanations respect chemical constraints and represent realistic molecular substructures. Invalid fragments (e.g., impossible bond configurations or non-existent functional groups) reduce the practical utility of explanations for domain experts. We define validity as:
    \begin{equation}
        \text{Validity} = \frac{|\{f \in \Phi : f \in \mathcal{F}_{\text{valid}}\}|}{|\Phi|},
    \end{equation}
    where $\Phi$ is the set of all generated fragments and $\mathcal{F}_{\text{valid}}$ represents the set of chemically valid fragments derived from the training data.
\end{itemize}

In Figures~\ref{fig:baseline_exp} and~\ref{fig:our_exp}, we normalized the complexity of the generated rule-based explanations for all approaches for clearer visualization. Specifically, we configured each approach—setting the tree depth for $\phi_M$, specifying the number of concepts for \textsc{GraphTrail}, and selecting the top-$k$ subgraphs for \textsc{GLGExplainer}—to generate rules of a comparable scale, using 3 concepts (or predicates) per class. This choice does not affect the fundamental nature of the methods, and we maintain hyperparameter and random seed settings consistent with the results in Table~\ref{tab:fid_results_ks}.

For the stability experiment in Section~\ref{sec:exp_quality}, however, a different complexity was required. A fair comparison of stability necessitates normalizing explanation complexity to a level that is both challenging and informative. Through preliminary analysis, we found that a low complexity setting (e.g., 3 concepts) was insufficiently discriminative for a rigorous comparison. In this setting, our method achieved near-perfect stability, creating a ceiling effect that, while demonstrating its robustness, prevented a more nuanced assessment of relative performance against the baselines. To create a more challenging benchmark that allows for a fine-grained evaluation across all methods, we chose a moderate complexity of approximately 5–6 concepts per class, as this was empirically determined to be the most informative for comparing the stability of the different approaches. For all other metrics, we use the default hyperparameter and seed settings for each explainer, consistent with Table~\ref{tab:fid_results_ks}.

\subsection{Further analysis of tables~\ref{tab:coverage_results} and~\ref{tab:stability_validity_results}}
\label{sec:further_analysis}

Table~\ref{tab:coverage_results} reveals an important distinction between the baselines in the binary classification setting. \textsc{GLGExplainer} produces rules for \emph{both} classes, but these rules are often conflicting and rely on less representative subgraphs, resulting in very low coverage (e.g., only 6.11\% for Mutagenicity class~0). Another major limitation of \textsc{GLGExplainer} is its reliance on prior knowledge and its high sensitivity to hyperparameter choices—an issue explicitly acknowledged by the original authors \citep{GLGExplainer}. This makes the method less applicable to new datasets; for example, despite replicating their settings with direct guidance from the authors, \textsc{GLGExplainer} failed to learn any rules on BBBP. We discuss these reproducibility challenges in more detail in Section~\ref{sec:baseline_repo}. By contrast, \textsc{GraphTrail} generates only \emph{unilateral} rules. This design makes it artificially easier for \textsc{GraphTrail} to appear correct---since all instances of the opposite class are explained by default---but at the cost of explanatory quality, as the method produces only discriminative rules rather than descriptive rules for each class. Furthermore, because \textsc{GraphTrail} frequently produces chemically invalid subgraphs, many of its explanations cannot be considered constructive, as they depend merely on the presence or absence of subgraphs that never occur in the dataset, further reducing effective coverage. We also consulted with the original authors and confirmed this issue, as discussed in Section~\ref{sec:baseline_repo}. This combination of unilateral shortcuts and invalid fragments underscores why \textsc{GraphTrail}'s explanations are unsuitable for faithful or practical interpretability.

In sharp contrast, our approach $\phi_M$ learns rules for \emph{both} classes while still achieving very high coverage (e.g., 80.06\% and 82.58\% for Mutagenicity, and 47.86\% and 98.16\% for BBBP). Achieving strong performance under this stricter and more balanced setting is particularly noteworthy, as it demonstrates that $\phi_M$ provides class-wise explanations without relying on unilateral shortcuts or invalid patterns. This fairness in rule construction makes the comparison against baselines more rigorous and highlights the strength of our method in generating meaningful explanations that remain faithful to the underlying model.  

Table~\ref{tab:stability_validity_results} complements this picture with \emph{stability} and \emph{validity}. $\phi_M$ attains the highest stability across seeds (66.67\% on Mutagenicity, 60.00\% on BBBP), surpassing \textsc{GLGExplainer} on Mutagenicity (40.00\%) and \textsc{GraphTrail} (37.50\% on Mutagenicity, 20.00\% on BBBP). High stability suggests that $\phi_M$ learns rules that are robust to randomness in training and sampling. On \emph{validity}, $\phi_M$ reaches 100.00\% $\pm$ 0.00 on both datasets, matching \textsc{GLGExplainer} on Mutagenicity but far exceeding \textsc{GraphTrail} (61.90\% $\pm$ 6.73 on Mutagenicity and 0.00\% on BBBP).  

Taken together, the results across coverage, stability, and validity form a consistent narrative: while \textsc{GLGExplainer} suffers from conflicting rules and poorly representative explanations, and \textsc{GraphTrail} exploits unilateral shortcuts compounded by invalid and equally unrepresentative subgraphs, $\phi_M$ generates balanced rules for both classes that are faithful, reproducible, and chemically valid. This balance is particularly rare in explanation methods, underscoring the robustness and scientific reliability of $\phi_M$ as a framework for generating high-quality graph explanations.

\paragraph{Why don’t we use a human study in this work to assess the final explanations?}

While human studies are widely employed in XAI for interpretability assessment, they are particularly ill-suited for scientific domains like biochemistry for several key reasons: (1) Meaningful evaluation in such contexts demands \emph{domain expertise}, making large-scale recruitment of qualified participants prohibitively difficult and expensive. (2) Laypeople's subjective perceptions of ``understandability'' often diverge significantly from \emph{scientific validity}—explanations that appear intuitively clear may be biochemically erroneous or fundamentally misleading. (3) Human evaluations inherently introduce substantial variability due to differences in participant expertise levels, experimental design choices, and evaluation criteria, thereby compromising the reproducibility essential for scientific validation.

We therefore adopt objective, quantitative metrics, \emph{coverage}, \emph{stability}, and \emph{validity}, specifically designed to evaluate biochemical explanation quality. This framework prioritizes generalizability, consistency, and scientific accuracy, providing a rigorous alternative to subjective assessments that aligns with the precision requirements of scientific inquiry.

\subsection{More examples}
\label{sec:more_examples}
We present the generated rule-based explanations of our approach $\phi_M$ on BAShapes, BBBP, Mutagenicity, and IMDB. NCI1 is excluded because the information on its feature attributes is not available. Although the features are one-hot encoded for atoms, the mapping between atoms and feature indices has not been released, so we chose not to report results for NCI1.

\textbf{IMDB Dataset Classification Rules} (Depth = 10)
\begin{flalign*}
&(\neg p_0 \land \neg p_{274} \land \neg p_{267} \land \neg p_{268} \land \neg p_{16} \land \neg p_{270} \land \neg p_{276} \land \neg p_{333} \land \neg p_{278} \land \neg p_{282}) &&\\
&\lor (\neg p_0 \land \neg p_{274} \land \neg p_{267} \land \neg p_{268} \land p_{16}) &&\\
&\lor (p_0) \Rightarrow \text{IMDB Class 0}; &&\\[0.5em]
&(\neg p_0 \land \neg p_{274} \land \neg p_{267} \land \neg p_{268} \land \neg p_{16} \land \neg p_{270} \land \neg p_{276} \land \neg p_{333} \land \neg p_{278} \land p_{282}) &&\\
&\lor (\neg p_0 \land \neg p_{274} \land \neg p_{267} \land \neg p_{268} \land \neg p_{16} \land \neg p_{270} \land \neg p_{276} \land \neg p_{333} \land p_{278}) &&\\
&\lor (\neg p_0 \land \neg p_{274} \land \neg p_{267} \land \neg p_{268} \land \neg p_{16} \land \neg p_{270} \land \neg p_{276} \land p_{333}) &&\\
&\lor (\neg p_0 \land \neg p_{274} \land \neg p_{267} \land \neg p_{268} \land \neg p_{16} \land \neg p_{270} \land p_{276}) &&\\
&\lor (\neg p_0 \land \neg p_{274} \land \neg p_{267} \land \neg p_{268} \land \neg p_{16} \land p_{270}) &&\\
&\lor (\neg p_0 \land \neg p_{274} \land \neg p_{267} \land \neg p_{268} \land p_{268}) &&\\
&\lor (\neg p_0 \land \neg p_{274} \land \neg p_{267} \land p_{267}) &&\\
&\lor (\neg p_0 \land p_{274}) \Rightarrow \text{IMDB Class 1} &&
\end{flalign*}

\vspace{10pt}

\begin{minipage}[t]{0.48\textwidth}
\textbf{BAShapes Class 0 Rules:}  (Depth = 5)
\begin{align*}
&(\neg p_{280} \land \neg p_{52} \land \neg p_{324} \land \neg p_{55} \land \neg p_{4495}) \\
&\lor (\neg p_{280} \land \neg p_{52} \land \neg p_{324} \land p_{55} \land \neg p_{14}) \\
&\lor (\neg p_{280} \land \neg p_{52} \land p_{324} \land \neg p_{587} \land p_{49}) \\
&\lor (\neg p_{280} \land \neg p_{52} \land p_{324} \land p_{587}) \\
&\lor (\neg p_{280} \land p_{52} \land \neg p_{8} \land \neg p_{324} \land \neg p_{55}) \\
&\lor (\neg p_{280} \land p_{52} \land p_{8} \land \neg p_{36} \land p_{1207}) \\
&\lor (\neg p_{280} \land p_{52} \land p_{8} \land p_{36}) \\
&\lor (p_{280} \land \neg p_{204} \land \neg p_{698} \land p_{1817}) \\
&\lor (p_{280} \land \neg p_{204} \land p_{698}) \\
&\lor (p_{280} \land p_{204}) \\
&\Rightarrow \text{BAShapes Class 0}
\end{align*}
\end{minipage}
\hfill
\begin{minipage}[t]{0.48\textwidth}
\textbf{BAShapes Class 1 Rules:} (Depth = 5)
\begin{align*}
&(\neg p_{280} \land \neg p_{52} \land \neg p_{324} \land \neg p_{55} \land p_{4495}) \\
&\lor (\neg p_{280} \land \neg p_{52} \land \neg p_{324} \land p_{55} \land p_{14}) \\
&\lor (\neg p_{280} \land \neg p_{52} \land p_{324} \land \neg p_{587} \land \neg p_{49}) \\
&\lor (\neg p_{280} \land p_{52} \land \neg p_{8} \land \neg p_{324} \land p_{55}) \\
&\lor (\neg p_{280} \land p_{52} \land \neg p_{8} \land p_{324}) \\
&\lor (\neg p_{280} \land p_{52} \land p_{8} \land \neg p_{36} \land \neg p_{1207}) \\
&\lor (p_{280} \land \neg p_{204} \land \neg p_{698} \land \neg p_{1817} \land \neg p_{113}) \\
&\lor (p_{280} \land \neg p_{204} \land \neg p_{698} \land \neg p_{1817} \land p_{113}) \\
&\Rightarrow \text{BAShapes Class 1}
\end{align*}
\end{minipage}

\vspace{10pt}

\textbf{BBBP Dataset Classification Rules} (Depth = 3)
\begin{flalign*}
&(\neg p_{38} \land \neg p_{5} \land p_{31}) \lor (\neg p_{38} \land p_{5} \land \neg p_{59}) \lor (\neg p_{38} \land p_{5} \land p_{59}) \lor (p_{38} \land \neg p_{61} \land p_{20}) &&\\
&\lor (p_{38} \land p_{61} \land \neg p_{54}) \lor (p_{38} \land p_{61} \land p_{54}) \Rightarrow \text{BBBP Class 0}; &&\\[0.5em]
&(\neg p_{38} \land \neg p_{5} \land \neg p_{31}) \lor (p_{38} \land \neg p_{61} \land \neg p_{20}) \Rightarrow \text{BBBP Class 1} &&
\end{flalign*}

\vspace{10pt}

\textbf{Mutagenicity Dataset Classification Rules} (Depth = 3)
\begin{flalign*}
&(\neg p_{2} \land p_{20} \land \neg p_{66}) \lor (p_{2} \land \neg p_{38} \land \neg p_{64}) \lor (p_{2} \land p_{38} \land p_{17}) \Rightarrow \text{Mutagenicity Class 0}; &&\\[0.5em]
&(\neg p_{2} \land \neg p_{20} \land \neg p_{1}) \lor (\neg p_{2} \land \neg p_{20} \land p_{1}) \lor (\neg p_{2} \land p_{20} \land p_{66}) \lor (p_{2} \land \neg p_{38} \land p_{64}) &&\\
&\lor (p_{2} \land p_{38} \land \neg p_{17}) \Rightarrow \text{Mutagenicity Class 1} &&
\end{flalign*}


\begin{figure}[t!]
    \centering
    \includegraphics[width=\textwidth]{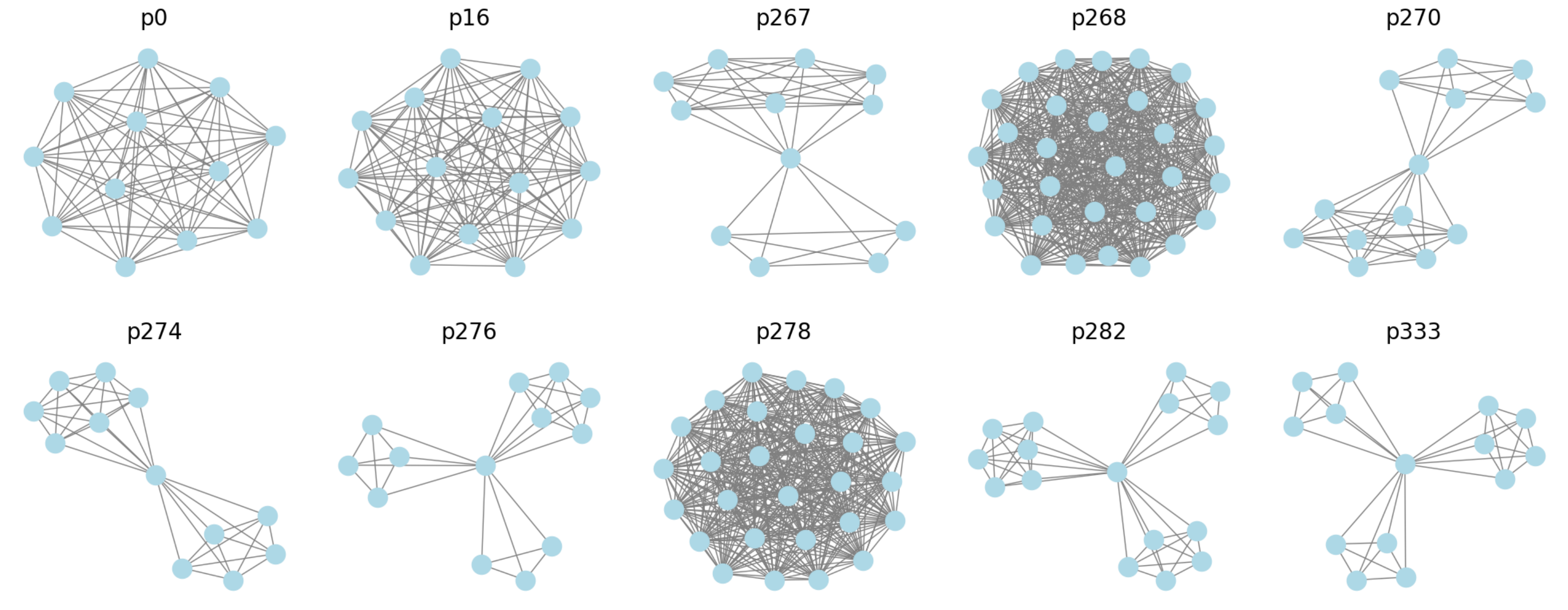}
    \caption{Our approach's grounded explanation ($\phi_M$) for IMDB. }
    \label{fig:IMBD}
\end{figure}

\begin{figure}[t!]
    \centering
    \includegraphics[width=\textwidth]{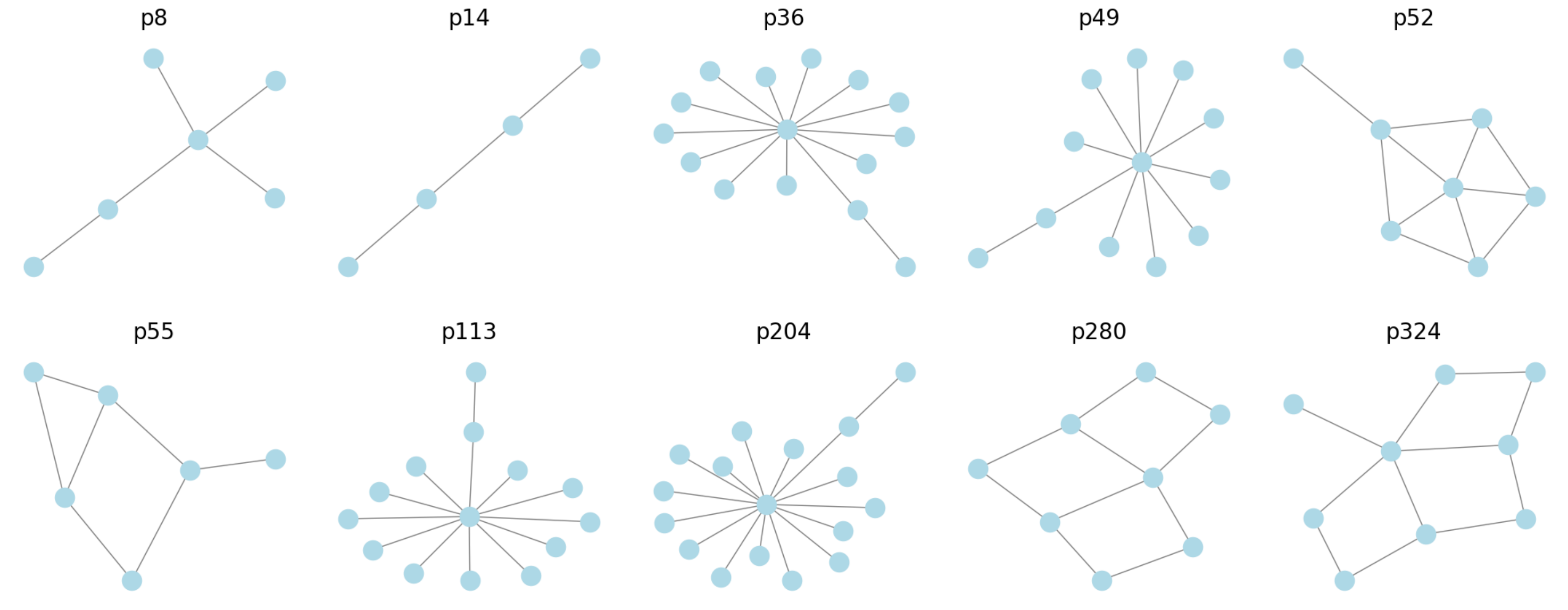}
    \caption{Our approach's grounded explanation ($\phi_M$) for BAMultiShapes.}
    \label{fig:BASHAPE}
\end{figure}

\begin{figure}[t!]
    \centering
    \includegraphics[width=\textwidth]{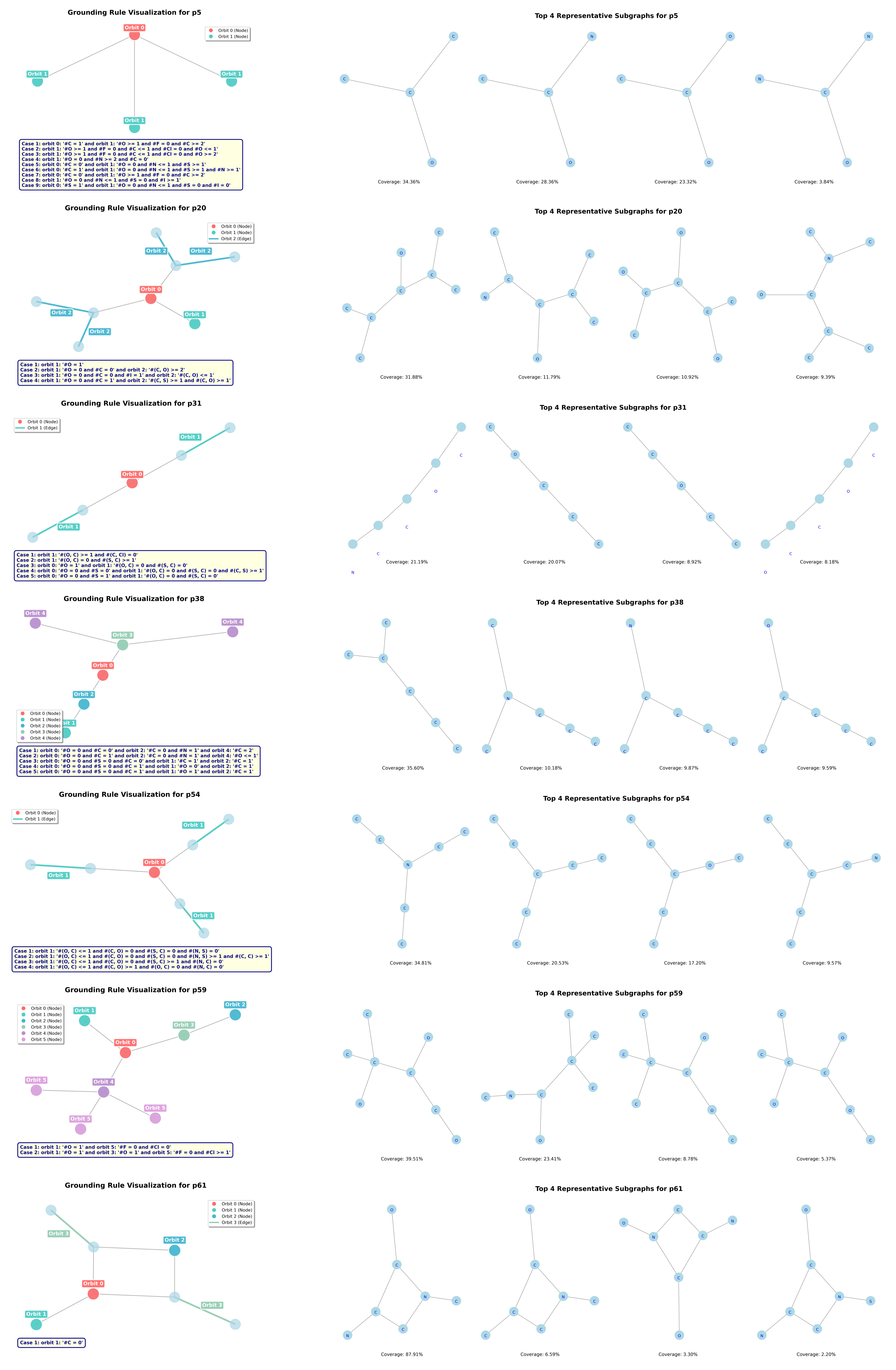}
    \caption{Our approach's grounded explanation ($\phi_M$) for BBBP.}
    \label{fig:BBBP}
    \vspace{-10pt}
\end{figure}

\begin{figure}[t!]
    \centering
    \includegraphics[width=\textwidth]{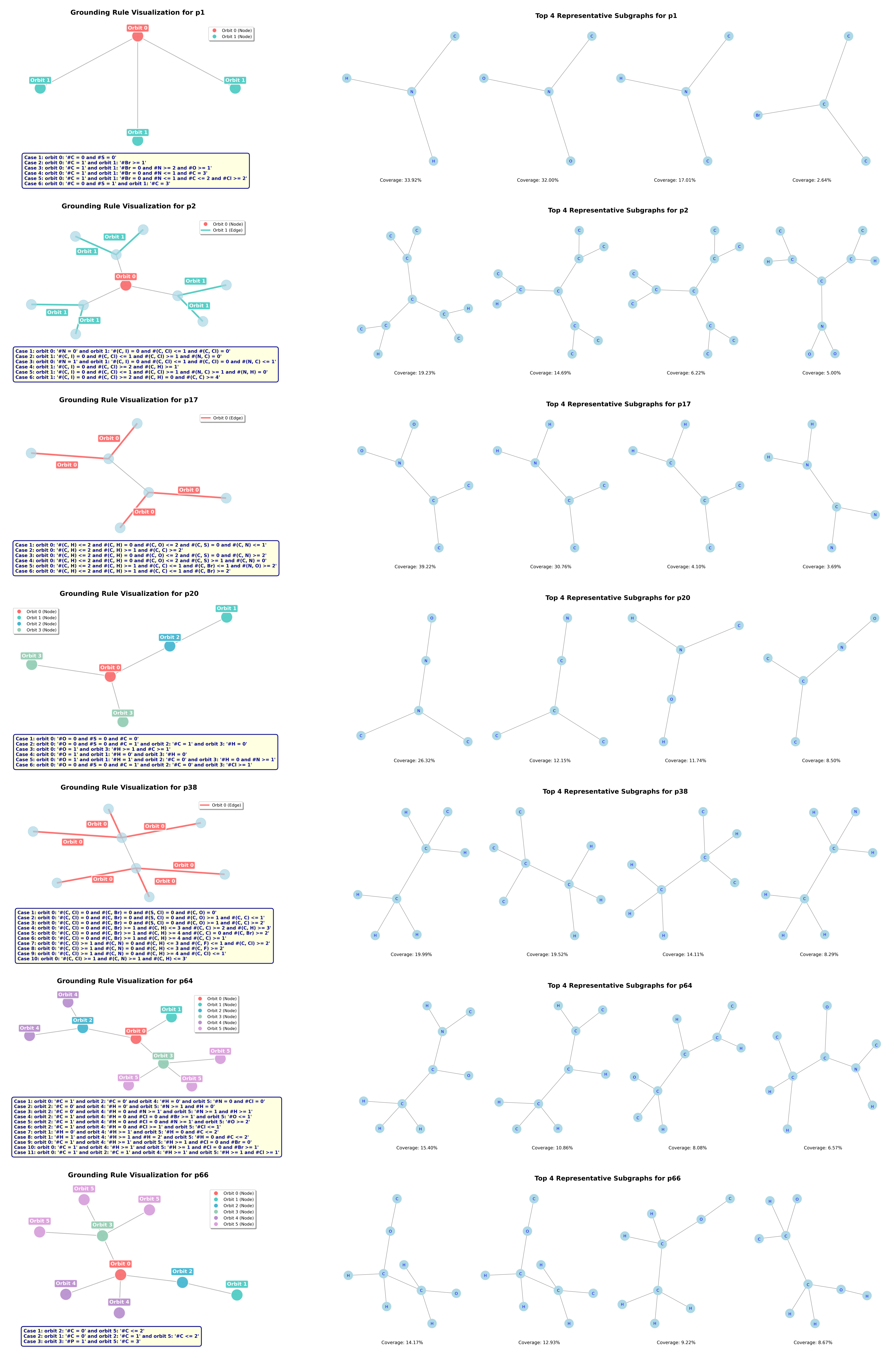}
    \caption{Our approach's grounded explanation ($\phi_M$) for Mutagenicity.}
    \label{fig:MUTAG}
    \vspace{-10pt}
\end{figure}

\newpage
\section{LLM Usage}
Large Language Models (LLMs) were used as a general-purpose assistive tool in the preparation of this work. Specifically, LLMs supported tasks such as refining the clarity of writing, suggesting alternative phrasings, and checking the consistency of technical terminology. They were \textbf{not} used for generating research ideas, conducting experiments, or producing original scientific contributions. All substantive research decisions, analysis, and results presented in this paper are the responsibility of the authors. The authors have carefully reviewed and verified all LLM-assisted text to ensure accuracy and originality.

\end{document}